\documentclass{article} 
\usepackage{iclr2027_conference,times}

\usepackage{amsmath,amsfonts,bm}

\def\eqref#1{equation~\ref{#1}}

\def\1{\bm{1}}

\DeclareMathAlphabet{\mathsfit}{\encodingdefault}{\sfdefault}{m}{sl}
\SetMathAlphabet{\mathsfit}{bold}{\encodingdefault}{\sfdefault}{bx}{n}

\usepackage{hyperref}
\usepackage{url}
\title{CoeF-SFL: Preserving Collaborative Server-Client Learning with Enhanced Communication Efficiency}

\author{
Junwoo Bae \\
Myongji University \\
\texttt{ag660340@mju.ac.kr}
\And
Jin-Hyun Ahn \\
Myongji University \\
\texttt{wlsgus3396@mju.ac.kr}
}

\usepackage{amsmath,amssymb,amsthm}
\allowdisplaybreaks   
\usepackage{booktabs}
\usepackage{multirow}
\usepackage{array}
\usepackage{comment}
\usepackage{graphicx}
\usepackage{placeins}
\usepackage{capt-of}
\usepackage{tabularx}
\usepackage{algorithm}
\usepackage{algorithmic}
\renewcommand{\floatpagefraction}{0.85}
\renewcommand{\topfraction}{0.9}

\renewcommand{\textfraction}{0.05}
\hypersetup{pdfauthor={},pdfsubject={},pdfkeywords={},colorlinks=true,linkcolor=black,citecolor=blue,urlcolor=blue}

\newtheorem{proposition}{Proposition}
\newtheorem{lemma}{Lemma}

\theoremstyle{definition}
\newtheorem{assumption}{Assumption}

\iclrfinalcopy 
\begin{document}
\frenchspacing

\maketitle

\begin{abstract}
Split Federated Learning (SFL) enables resource-constrained clients to participate in collaborative training, but vanilla SFL exchanges smashed data and gradients at every batch, which incurs significant communication overhead. Recent methods reduce this overhead with an auxiliary network at the client-side cut layer. However, we identify that this approach makes the client optimize a local objective that differs from the end-to-end objective, which fundamentally limits the collaborative training between the client and the server. We propose Compensated Feedback based SFL (CoeF-SFL), a communication-efficient framework that retains the end-to-end objective without any auxiliary network. In CoeF-SFL, the client and the server exchange the smashed data and the gradients once per round and reuse them during local training. Since this reuse makes the gradients stale on the client side, we compensate them with a curvature-based correction in the activation space and develop two variants. CoeF-D approximates the Hessian with a diagonal gradient outer product, while CoeF-J exploits the tractable Jacobian-based Hessian of a surrogate loss that upper-bounds the true loss. We provide the theoretical background of each method, characterizing its compensation. Across vision and language tasks, model capacities, cut layers, and data distributions, CoeF-SFL significantly outperforms auxiliary-network-based methods under the same communication frequency, and the improvement is most substantial on vision tasks. Code is available at \url{https://anonymous.4open.science/r/CoeF-SFL-2686/README.md}
\end{abstract}

\providecommand{\std}[1]{\,\mbox{\tiny$\pm$\,#1}}
\vspace{-1mm}
\section{Introduction} 
Split Federated Learning (SFL) \citep{thapa2022splitfed} enables resource-constrained clients to collaboratively train a model without sharing raw data. By splitting the model at a cut layer as in split learning \citep{gupta2018splitlearning,vepakomma2018splitnn} and training the client-side models in parallel as in federated learning \citep{mcmahan2017fedavg}, SFL avoids both the heavy client-side computation of federated learning and the sequential training latency of split learning. In SFL, the server updates the server-side model with the smashed data from the clients and returns the corresponding gradients, so that the split models collaboratively optimize the end-to-end (E2E) objective through server-side feedback.

\subsection{Auxiliary-network-based SFL} 
Although SFL addresses computational burden and latency, significant communication overhead remains due to the per-batch exchange between the client and the server. To address this overhead, \textit{AccSFL} \citep{han2022localloss} has introduced an auxiliary network into the SFL framework. Specifically, by attaching the network at the cut layer, the client-side models can perform parameter updates without any feedback from the server side. This mechanism eliminates the need to wait for the gradient corresponding to the transmitted smashed data, thereby reducing latency and communication burden. In fact, since AccSFL was proposed, many studies \citep[e.g.,][]{oh2022locfedmix, mu2023csefsl, nair2025fslsage,shin2026fedsplitx} have assumed a system model in which an auxiliary network is attached to the client side to achieve high communication efficiency.

\begin{table}[t]
\centering
\small
\setlength{\tabcolsep}{6pt}
\renewcommand{\arraystretch}{1.1}
\caption{Preliminary results of vanilla SFL \citep{thapa2022splitfed}, AccSFL \citep{han2022localloss}, two reference methods, reported as test accuracy (\%) and EM/F1 (\%) on vision and language tasks with ViT-Tiny and DistilRoBERTa, respectively. Gradient reuse (w/o comp.) is the naive baseline, and Gradient reuse (w/ perfect comp.) is the oracle. For all methods except vanilla SFL, each round, i.e., the interval between two consecutive server feedbacks, corresponds to one local epoch. The other settings, including the SFL setting, local training, and hyperparameter tuning, are described in Section~\ref{sec:exp}.}
\label{tab:intro}
\begin{tabular}{l c c c}
\toprule
\textbf{Method} & \textbf{CIFAR-100} & \textbf{Food-101} & \textbf{Tiny-ImageNet} \\
\midrule

Vanilla SFL & $85.0\std{0.3}$ & $83.8\std{0.1}$ & $78.6\std{0.2}$ \\
AccSFL & $65.6\std{1.2}$ & $67.5\std{0.1}$ & $47.4\std{0.2}$ \\
\midrule

Gradient reuse (w/o comp.) & $78.4\std{0.4}$ & $75.3\std{2.1}$ & $72.8\std{0.2}$ \\
Gradient reuse (w/ perfect comp.) & $84.5\std{0.3}$ & $83.3\std{0.0}$ & $78.3\std{0.3}$ \\

\bottomrule
\end{tabular}\\[6pt]
{\setlength{\tabcolsep}{3pt}
\begin{tabular}{l c c c c c}
\toprule
\textbf{Method} & \textbf{20 Newsgroups} & \textbf{MNLI-m} & \textbf{MNLI-mm} & \textbf{EM} & \textbf{F1} \\
\midrule

Vanilla SFL & $69.8\std{0.2}$ & $83.3\std{0.3}$ & $83.7\std{0.3}$ & $71.3\std{0.2}$ & $74.5\std{0.2}$ \\
AccSFL & $66.6\std{0.3}$ & $77.7\std{0.3}$ & $78.5\std{0.3}$ & $61.6\std{1.2}$ & $64.7\std{1.1}$ \\
\midrule

Gradient reuse (w/o comp.) & $69.3\std{0.0}$ & $37.1\std{6.2}$ & $37.7\std{7.1}$ & $49.9\std{0.2}$ & $50.0\std{0.2}$ \\
Gradient reuse (w/ perfect comp.) & $69.3\std{0.2}$ & $83.0\std{0.1}$ & $83.1\std{0.2}$ & $70.2\std{2.1}$ & $73.1\std{1.5}$ \\

\bottomrule
\end{tabular}}
\vspace{-8pt}
\end{table}

However, the auxiliary-network-based approach sacrifices a key advantage of the SFL framework, which is that the split models collaboratively optimize the E2E objective through server-side feedback. Although \citep{nair2025fslsage} pointed out this limitation, noting that such methods ``lack server feedback and potentially suffer poor accuracy,'' why it constitutes a fundamental problem can be explained by the analysis of decoupled training in \citep{wang2021infopro}, which corresponds to auxiliary-network-based SFL with a single client. Their information-theoretic and empirical analysis shows that such decoupled training with a local objective is suboptimal. Specifically, the local objective drives the features at the cut layer to be over-compressed, which discards task-relevant information that the subsequent layers would otherwise exploit. Furthermore, once this information is discarded, the subsequent layers cannot recover it even with additional capacity. Our preliminary results in Table~\ref{tab:intro} support this explanation in the context of SFL, showing the corresponding performance degradation of AccSFL compared with vanilla SFL on various vision and language tasks.

\subsection{Motivations and Contributions}
Motivated by this observation, we argue that the client-side model should be trained with the gradient computed by the server, as in vanilla SFL, while preserving the communication efficiency. To this end, the clients in the proposed method reuse the gradients received at the beginning of the round over multiple local updates, just as the server repeatedly updates its model with the received smashed data in auxiliary-network-based SFL. Although this retains the E2E objective with improved communication efficiency, it causes a mismatch between the activation and the gradient, since the activation of each input changes as the client-side model is updated while the received gradient remains fixed. We refer to this mismatch as the \textit{staleness} of the activation-space gradient, and formulate it after the $i$-th local update for input $x$ as
\begin{equation}
\label{eq:gradient_staleness}
\varepsilon_{i}(x)
:=
\underbrace{\nabla_s\mathcal{L}\bigl(s_{i};\theta^s\bigr)}_{\text{(a) Fresh gradient}}
-\underbrace{\nabla_s\mathcal{L}\bigl(s_0;\theta^s\bigr)}_{\text{(b) Stale gradient}},
\qquad s_i=f_c\bigl(x;\theta^{c}_{i}\bigr),
\end{equation}
where $s_i$ is the activation of $x$ after the $i$-th local update, and $\theta^s$ is the server-side model at the beginning of the round. The fresh gradient matches the current client-side model $\theta^{c}_{i}$ but is unavailable to the client, which does not hold the server-side model, whereas the stale gradient is computed by the server at the initial activation $s_0$. As shown by Gradient reuse (w/o comp.) in Table~\ref{tab:intro}, which naively reuses the received gradients, this staleness severely degrades the performance of SFL.

To examine the potential of compensation, Gradient reuse (w/ perfect comp.) in Table~\ref{tab:intro} denotes an oracle that completely removes the staleness by using the fresh gradient computed with the server-side model at the beginning of the round. It significantly outperforms both AccSFL and Gradient reuse (w/o comp.) and achieves performance close to that of vanilla SFL, which suggests that SFL can retain the communication efficiency while outperforming auxiliary-network-based SFL once the stale gradients are properly compensated. Building on this observation, we propose Compensated Feedback based SFL (CoeF-SFL), which compensates the stale gradients with a curvature-based correction in the activation space and thus achieves both collaborative server-client learning and communication efficiency, as illustrated in Figure~\ref{fig:overview}. We develop two variants, CoeF-D and CoeF-J. CoeF-D approximates the Hessian with a diagonal gradient outer product and requires no additional server-side computation, while CoeF-J exploits the tractable Jacobian-based Hessian of a surrogate loss function that upper-bounds the true loss, which requires only first-order computation on the server but tends to be more robust under non-IID settings and deep cut layers, as shown in Section~\ref{sec:exp}. The design space that we explored for the compensation is discussed in Section~\ref{sec:method}. Beyond the theoretical and empirical validation of CoeF-SFL, the contributions of this paper are summarized as follows.

\begin{figure}[t]
    \centering
    \includegraphics[width=1.0\textwidth]{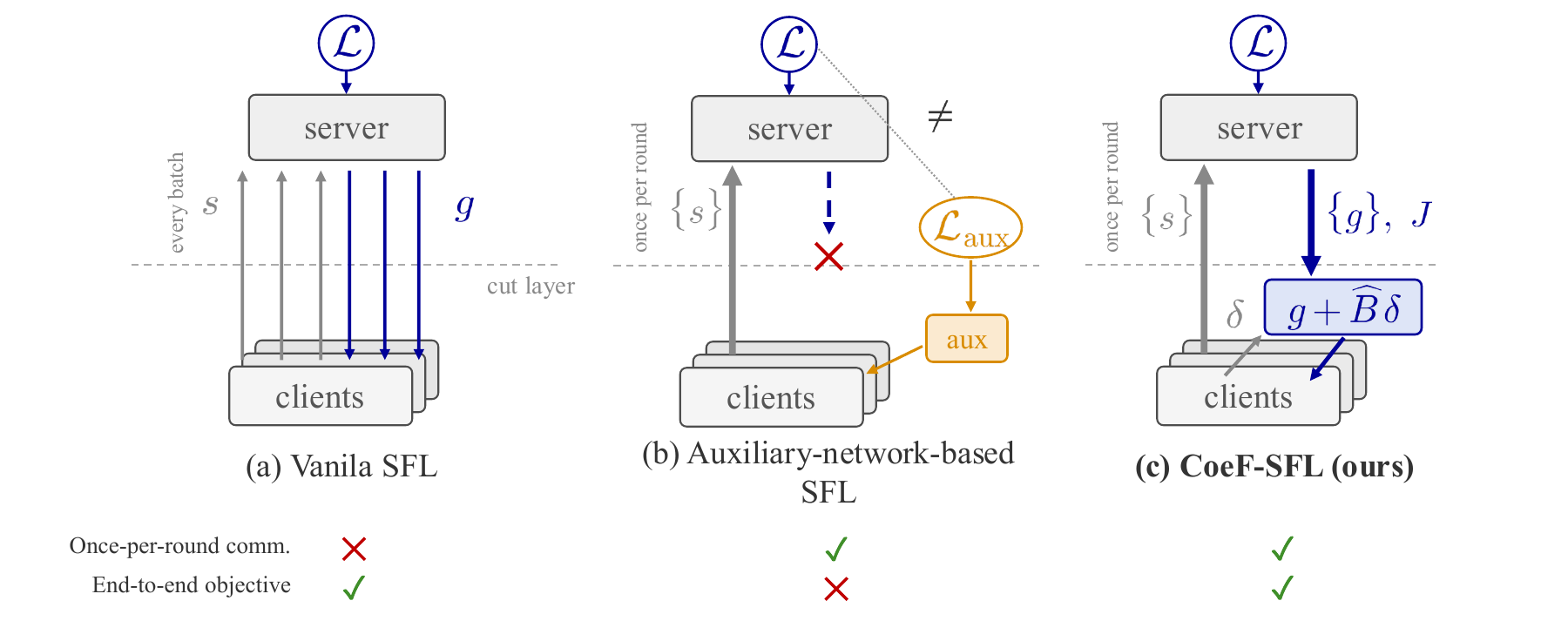}
\caption{Comparison of SFL frameworks. Vanilla SFL optimizes the end-to-end objective $\mathcal{L}$ through server-side feedback at every batch, which incurs heavy communication. Auxiliary-network-based SFL communicates once per round, but the client optimizes an auxiliary objective $\mathcal{L}_{\mathrm{aux}}$ that differs from $\mathcal{L}$. CoeF-SFL communicates once per round and preserves the end-to-end objective, compensating the received gradients according to the displacement $\delta$ of activation.}
\label{fig:overview}
\vspace{-8pt}
\end{figure}

\vspace{-3mm}
\paragraph{Identifying the loss of the end-to-end objective.} We identify that auxiliary-network-based SFL, which has been widely adopted in recent SFL studies, loses the E2E objective that the client-side and server-side models collaboratively optimize. To retain this objective with high communication efficiency, we propose a new SFL framework in which the clients reuse the gradients received from the server over multiple local updates.

\vspace{-3mm}
\paragraph{Analyzing and compensating gradient staleness.} To the best of our knowledge, this is the first work that analyzes the staleness of the activation-space gradient, which arises when the client-side model is updated with the gradients received at the beginning of the round. We analyze this staleness both empirically and theoretically, and develop two compensation methods, CoeF-D and CoeF-J, to overcome it.

\vspace{0mm}

\section{Related Work}
\label{sec:related_work}
\paragraph{Auxiliary-network-based SFL.}
Several variants have been built on AccSFL \citep{han2022localloss}, which maintains a separate server-side model for each client. LocFedMix \citep{oh2022locfedmix} additionally introduces data mixup, and CSE-FSL \citep{mu2023csefsl} adopts a single server-side model to reduce the storage requirement of the server. FSL-SAGE \citep{nair2025fslsage} addresses the lack of server feedback by periodically aligning the auxiliary models with the server-side model on the server, and FedSplitX \citep{shin2026fedsplitx} extends the auxiliary-network-based mechanism to environments with system heterogeneity. Since we consider the setting of SplitFed V1 \citep{thapa2022splitfed} without system-level heterogeneity across clients, and data mixup is orthogonal to the update rule, FSL-SAGE is the only variant whose update rule substantially differs from that of AccSFL in our setting. Therefore, we adopt AccSFL and FSL-SAGE as the baselines of auxiliary-network-based SFL.

\paragraph{Straggler-resilient SFL.}

Unlike auxiliary-network-based SFL, GAS \citep{yang2024gas} and MU-SplitFed \citep{liang2025straggler} consider that clients upload their activations at different times owing to system heterogeneity, and aim to prevent training from being delayed by stragglers. In GAS, each client receives the gradient at every batch as in vanilla SFL, but the gradient is computed with the server-side model at the time of the upload and returned immediately. In MU-SplitFed, the clients perform zeroth-order optimization without backpropagation, and the server performs multiple updates for each client update while returning a scalar feedback instead of the gradient. Although neither method reduces the upload frequency of the clients, which is the focus of this paper, we include them for a more informative comparison by assuming, only for these two methods, that the clients upload their activations sequentially. 

\vspace{-3mm}
\paragraph{Delay compensation in asynchronous SGD.} DC-ASGD \citep{zheng2017dcasgd} compensates delayed parameter gradients in asynchronous SGD through a first-order expansion, where the Hessian
is approximated by a scaled diagonal outer product of the gradient and multiplied by the parameter
displacement. We draw on this principle for a different learning structure. In SFL with gradient
reuse, the stale quantity is the activation-space gradient rather than the parameter gradient, its staleness
is caused by the client’s own local updates rather than by the asynchrony across workers, and
the client has to compensate it without access to the server-side model. CoeF-D carries the compensation
of DC-ASGD from the parameter space to the activation space using only the received
gradients, while CoeF-J additionally exploits the Jacobian computed by the server at the beginning
of the round.

\vspace{-3mm}
\paragraph{Curvature surrogates and upper-bound models.}
Curvature information in centralized training can be accessed through the generalized Gauss--Newton (GGN) matrix \citep{schraudolph2002ggn,martens2020natural}, stochastic diagonal estimators \citep{bekas2007diag}, and low-rank representations \citep{dangel2022vivit}. The gradient outer product offers another surrogate, which coincides with the empirical Fisher and generally differs from the Hessian \citep{kunstner2019empiricalfisher}. Majorization--minimization \citep{hunter2004mm,bohning1988} instead minimizes an upper-bound model of the objective, for which the global curvature bound of the softmax cross-entropy loss \citep{bohning1992} is a well-known ingredient. These methods estimate the curvature with respect to the model parameters, whereas the compensation in our setting requires the curvature with respect to the cut-layer activation, which the client cannot compute without the server-side model. CoeF-D uses the diagonal gradient outer product as its surrogate, and CoeF-J replaces the output-loss curvature in the GGN structure with its global bound, which yields an upper-bound model whose Hessian is computed from the Jacobian transmitted by the server.

\section{Compensated Feedback based SFL (CoeF-SFL)}
\label{sec:method}

\vspace{-3mm}
\paragraph{Problem setup.} We consider a SplitFed V1 setting with $K$ clients and $T$ rounds. For simplicity, we omit the indices of clients and rounds, and denote the client-side and server-side models at the beginning of each round by $\theta^{c}$ and $\theta^s$, respectively. At the beginning of the round, the client generates the smashed data $\bm{s}=\left\{f_c\bigl(x;\theta^{c}\bigr)\right\}$ from its local data $\mathcal{D}=\left\{(x,y)\right\}$ through the client-side model. The server receives $\left\{(s,y)\right\}$ and computes the gradients $\bm{g}=\left\{\nabla_s\mathcal{L}(s;\theta^s)\right\}$ with the server-side model. After the client receives $\bm{g}$, the client and the server perform $I$ local updates with gradient reuse and smashed-data reuse, respectively.

Starting with $\theta_{0}^{c}=\theta^{c}$, the client-side model after the $i$-th local update is denoted by $\theta_{i}^{c}$. Although the data in the mini-batch differ across local updates, we omit the corresponding index for simplicity. Accordingly, $\bm{s}_{i}=\left\{f_c\bigl(x;\theta^{c}_{i}\bigr)\right\}$ denotes the smashed data recomputed with $\theta^{c}_{i}$, and $\bm{g}_{0,i}$ denotes the received gradients $\bm{g}$, both for the data in the mini-batch of the $i$-th local update. Then, with Gradient reuse (w/o comp.), the client-side model is updated as
\begin{equation}\label{eq:client-local-stale}
    \theta_i^{c}=\theta_{i-1}^{c}-\eta J_{i-1}^{\top}\bm{g}_{0,i-1},
\qquad J_{i}:=\frac{\partial \bm{s}_{i}}{\partial\theta^{c}_{i}},
\end{equation}
whereas with Gradient reuse (w/ perfect comp.), it is updated as
\begin{equation}
    \theta_i^{c}=\theta_{i-1}^{c}-\eta J_{i-1}^{\top}\nabla_s\mathcal{L}\bigl(\bm{s}_{i-1};\theta^s\bigr).
\end{equation}
Furthermore, for AccSFL without knowledge distillation, 
\begin{equation}
    \theta_i^{c}=\theta_{i-1}^{c}-\eta \nabla_{\theta^c}\mathcal{L}_{\mathrm{aux}}\bigl(\bm{s}_{i-1};\theta^c_{i-1}\bigr),
\end{equation}
where $\mathcal{L}_{\mathrm{aux}}$ denotes the local loss computed through the auxiliary network.
In all cases, starting with $\theta_{0}^{s}=\theta^{s}$, the server-side model is updated as
\begin{equation}\label{eq:server-local}
    \theta_i^{s}=\theta_{i-1}^{s}-\eta \nabla_{\theta^{s}}\mathcal{L}\bigl(\bm{s}_{0,i-1};\theta_{i-1}^s\bigr),
\end{equation}
where $\bm{s}_{0,i}$ denotes the smashed data $\bm{s}$ received from the client for the data in the mini-batch of the $i$-th local update. While the server-side update in \eqref{eq:server-local} has no staleness since the received smashed data are fixed, we aim to compensate the staleness of the client-side update in \eqref{eq:client-local-stale}.

\vspace{-3mm}
\paragraph{Toward compensating gradient staleness.}
To resolve this staleness, we explored various approaches. The first approach is to train a regression model on the server, which maps the received smashed data to the computed gradients, and to broadcast it to the clients. Since this model learns the local landscape of the server-side loss at every round, it can provide the gradient even when the activation is displaced during local training. However, our exploration shows that the target of the regression, i.e., the gradient, has a very high dimension, which makes the regression highly inefficient to train. Consequently, the size of the regression network and the per-round training on the server become impractically burdensome. To mitigate this, we further considered techniques such as MAML \citep{finn2017maml, finn2019online} and the neural tangent kernel (NTK) \citep{jacot2018neural}, but they did not sufficiently alleviate the burden. Therefore, we adopt compensation methods based on the first-order expansion of the gradient, i.e., the curvature, in the activation space, which can be regarded as the most fundamental approach.



With the displacement $\bm{\delta}_i:=\bm{s}_i-\bm{s}_{0,i}$, the fundamental theorem of calculus \citep{nocedal2006numerical} gives the fresh gradient exactly,
\begin{equation}
\label{eq:identity}
\bm{g}_i=\bm{g}_{0,i}+\bar H_i\bm{\delta}_i,
\qquad
\bar H_i:=\int_0^1\nabla_s^2\mathcal L\big(\bm{s}_{0,i}+\tau\bm{\delta}_i;\theta^s\big)\,d\tau.
\end{equation}
However, the client cannot compute $\bar H_i$, CoeF-SFL replaces the curvature with matrix $\widehat{B}_i$ built from what the server transmitted at the round-initial exchange, and the client uses
\begin{equation}
\label{eq:corr}
\widetilde {\bm{g}}_i:=\bm{g}_{0,i}+\widehat B_i\bm{\delta}_i =\bm{g}_i+\big(\widehat B_i-\bar H_i\big)\bm{\delta}_i ,
\end{equation}
in place of $\bm{g}_{0,i}$ in \eqref{eq:client-local-stale}. Therefore, the key problem is the reducing the curvature mismatch or the displacement.

\vspace{-3mm}
\paragraph{CoeF-D: diagonal gradient outer product.}
Since CoeF-D computes $\widehat{B}_{\mathrm{D}}$ only from the received gradient, it does not require any further feedback from the server. As discussed in Section~\ref{sec:related_work}, CoeF-D carries the compensation principle of DC-ASGD \citep{zheng2017dcasgd} from the parameter space to the activation space. Specifically, CoeF-D scales the diagonal of the gradient outer product by a compensation factor $\lambda$ as
\begin{equation}
\label{eq:coefd}
\widetilde{\bm{g}}_{\mathrm D,i}=\bm{g}_{0,i}+\widehat B_{\mathrm {D},i}\bm{\delta}_i,
\qquad
\widehat B_{\mathrm {D},i}:=\lambda\operatorname{diag}\big(\bm{g}_{0,i}\odot\bm{g}_{0,i}\big),
\end{equation}
where $\widetilde{\bm{g}}_{\mathrm D,i}$ is the compensated gradient of CoeF-D, and $\lambda>0$ is the compensation factor whose choice is analyzed in Section~\ref{sec:theory}. The outer product is a reasonable surrogate of the Hessian, since it shares the structure of the curvature that it replaces. To show this, let $J_{s,i}:=\partial f_s\bigl(\bm{s}^{*}_{0,i};\theta^{s}\bigr)/\partial s$ denote the Jacobian of the server output $\phi$ computed at $\bm{s}^{*}_{0,i}$, and let $r_i:=\nabla_\phi\mathcal L\bigl(\bm{s}^{*}_{0,i};\theta^s\bigr)$ and $\Lambda_i:=\nabla_\phi^2\mathcal L\bigl(\bm{s}^{*}_{0,i};\theta^s\bigr)$ denote the gradient and the Hessian of the loss with respect to $\phi$, respectively. Here, $\bm{s}^{*}_{0,i}$ denotes the subset of $\bm{s}_0$ whose data belong to the mini-batch of the $i$-th local update at the client, whereas $\bm{s}_{0,i}$ corresponds to the mini-batch of the $i$-th local update at the server. By the chain rule, we have
\begin{equation}
\label{eq:outer}
\bm{g}_{0,i}\odot \bm{g}_{0,i}=J_{s,i}^{\top}\big(r_i r_i^{\top}\big)J_{s,i},
\end{equation}
which has the same form as the generalized Gauss--Newton (GGN) matrix $G_i:= J_{s,i}^{\top}\Lambda_{i}J_{s,i}$ and agrees with it in expectation up to the mean-reduction weight, as discussed in Proposition~\ref{prop1}. The correction is computed elementwise on the client at a cost of $O(d)$ per sample, where $d$ denotes the dimension of the activation space, without any additional server-side computation.

\vspace{-3mm}
\paragraph{CoeF-J: compensation with an upper bound on the loss.}
The error of the compensated gradient in \eqref{eq:corr}, $\widetilde{\bm{g}}_i-\bm{g}_i=(\widehat B_i-\bar H_i)\bm{\delta}_i$, is the product of the curvature mismatch and the displacement. CoeF-D reduces this error by approximating the curvature, which is accurate only while the displacement is small. CoeF-J takes a different route. Instead of approximating the curvature, it constructs a surrogate that upper-bounds the loss, and uses the gradient of the surrogate as the compensated gradient, which does not require an accurate curvature.

The surrogate is built from the structure of the curvature. Since the loss depends on the smashed data only through the server output, its Hessian with respect to the smashed data consists of the GGN term $J_{s,i}^{\top}\Lambda_i J_{s,i}$ and a term that involves the second derivative of the server-side model. The client cannot compute $\Lambda_i$, but for standard losses, the curvature with respect to the server output is bounded as $\Lambda_i\preceq MI$, where $M$ is fixed by the loss, e.g., $M=1/2$ for the softmax cross-entropy loss \citep{bohning1992}. Replacing $\Lambda_i$ with this bound yields
\begin{equation}
\label{eq:major}
\mathcal L(\bm{s}_{0,i}+\bm{\delta}_i;\theta^s)
\le\widehat F_{\mathrm J}(\bm{\delta}_i)
:=\underbrace{\mathcal L(\bm{s}_{0,i};\theta^s)+\langle\bm{g}_{0,i},\bm{\delta}_i\rangle}_{\text{first-order expansion}}
+\underbrace{\tfrac{M}{2}\,\bm{\delta}_i^{\top}P_i\bm{\delta}_i}_{\text{curvature bound}},
\qquad
P_i:=J_{s,i}^{\top}J_{s,i},
\end{equation}
which holds exactly when the server output changes linearly along the displacement, and approximately otherwise, as analyzed in Proposition~\ref{prop2}. Since the surrogate is tangent to the loss at the transmitted smashed data, any step that lowers $\widehat F_{\mathrm J}$ below its value at $\bm{\delta}_i=\bm{0}$ also keeps the loss below its value at $\bm{s}_{0,i}$, as in majorization--minimization \citep{hunter2004mm}. CoeF-J uses the gradient of the surrogate as the compensated gradient,
\begin{equation}
\label{eq:coefj}
\widetilde{\bm{g}}_{\mathrm J,i}=\nabla\widehat F_{\mathrm J}(\bm{\delta}_i)=\bm{g}_{0,i}+\widehat B_{\mathrm J,i}\bm{\delta}_i,
\qquad
\widehat B_{\mathrm J,i}:=MP_i,
\end{equation}
which takes the form of \eqref{eq:corr}. Therefore, the client needs only the gradient and Jacobian computed at the server with $\bm{s}$. In practice, the server does not send Jacobian, whose size grows with the output dimension. Instead, it sends $R$ random projections, whose outer products equal $P_i$ in expectation, as will be discussed in Section \ref{sec:theory}.

\vspace{-3mm}
\paragraph{CoeF-J: interpretation as sensitivity.}
The matrix $P_i$ has an intuitive meaning. To first order, a displacement $\bm{\delta}_i$ changes the server output by $J_{s,i}\bm{\delta}_i$, and thus
\begin{equation}
\label{eq:sens}
\bm{\delta}_i^{\top}P_i\bm{\delta}_i=\lVert J_{s,i}\bm{\delta}_i\rVert^2
\end{equation}
measures how strongly the displacement changes the server output. We call $P_i$ the sensitivity matrix, since \eqref{eq:sens} is large for a displacement that changes the server output strongly and zero for a displacement that leaves it unchanged. With $\bm{g}_{0,i}=J_{s,i}^{\top}r_i$, the compensated gradient can be written as $\widetilde{\bm{g}}_{\mathrm J,i}=J_{s,i}^{\top}(r_i+MJ_{s,i}\bm{\delta}_i)$, where $r_i+MJ_{s,i}\bm{\delta}_i$ is a first-order estimate of the residual at the current smashed data in which $\Lambda_i$ is replaced by its bound $MI$. Since $\langle\bm{\delta}_i,\widehat B_{\mathrm J,i}\bm{\delta}_i\rangle=M\lVert J_{s,i}\bm{\delta}_i\rVert^2\ge0$, a descent step with $\widetilde{\bm{g}}_{\mathrm J,i}$ pulls the smashed data back toward $\bm{s}_{0,i}$, and more strongly for a displacement that changes the server output more. In this sense, CoeF-J reduces the error in \eqref{eq:corr} from the side of the displacement by keeping it small in the directions to which the server output is sensitive, whereas CoeF-D reduces the error from the side of the curvature. We note that this effect is not introduced by an additional regularizer, since it follows from the upper bound in \eqref{eq:major} with the coefficient $M$ fixed by the loss. We provide the full algorithms of CoeF-D and CoeF-J in Appendix~\ref{app:algorithm}.


\section{Theoretical Analysis}
\label{sec:theory}
We analyze why the gradient outer product can serve as a curvature surrogate
for CoeF-D and when its correction reduces the feedback error. For CoeF-J, we
examine when its sensitivity-based surrogate upper-bounds the loss. Throughout
this section, the server-side model is fixed at its round-initial value
$\theta^s$ as in \eqref{eq:identity}, and $\phi=f_s(\cdot;\theta^s)$ denotes the
server output of length $C$. The quantities $J_{s,i}$, $r_i$, $\Lambda_i$, and
$G_i$ are defined as in Section~\ref{sec:method}. Our analysis relies on the
following assumptions, whose resulting statements are proved in
Appendix~\ref{app:proofs}.
\begin{assumption}
\label{asm:lip}
$\nabla_s^2\mathcal L(\cdot;\theta^s)$ is $L_H$-Lipschitz on the segment from
$\bm s_{0,i}$ to $\bm s_i$.
\end{assumption}

\begin{assumption}
\label{asm:curv}
The Hessian of the loss with respect to the server output satisfies
$\nabla_\phi^2\mathcal L\preceq MI$ at every server output.
\end{assumption}

For softmax cross-entropy $\ell$, Assumption~\ref{asm:curv} holds with $M=\tfrac12$
\citep{bohning1992}; the output-space curvature of each sample in a mean-reduced
mini-batch of size $b$ is further scaled by $w=1/b$. Let
$H_{0,i}:=\nabla_s^2\mathcal L(\bm s_{0,i};\theta^s)$. By \eqref{eq:identity}
and Assumption~\ref{asm:lip}, the error of the compensated gradient satisfies
\begin{equation}
\label{eq:three}
\lVert\widetilde{\bm g}_i-\bm g_i\rVert
\le\lVert\widehat B_i-H_{0,i}\rVert\,\lVert\bm\delta_i\rVert
+\frac{L_H}{2}\lVert\bm\delta_i\rVert^2 .
\end{equation}
The first term measures the curvature mismatch at the transmitted smashed data;
the second accounts for the change of curvature along the displacement.

\vspace{-3mm}
\paragraph{CoeF-D.}
Let $D_i:=\operatorname{diag}(\bm g_{0,i}\odot\bm g_{0,i})$, so that
$\widehat B_{\mathrm D,i}=\lambda D_i$ in \eqref{eq:coefd}.
Proposition~\ref{prop1} gives both the curvature interpretation of this
diagonal outer product and the condition under which it improves the feedback
at a fixed displacement.

\begin{proposition}[Curvature Surrogate of CoeF-D]
\label{prop1}
\begin{enumerate}
\renewcommand{\labelenumi}{(\roman{enumi})}
\setlength{\itemsep}{0pt}
\item For the cross-entropy loss averaged over a mini-batch, suppose that its
labels are drawn independently from their predictive distributions at the
round-initial output. Then
\[
\mathbb E[\widehat B_{\mathrm D,i}]=\lambda w\operatorname{diag}(G_i).
\]
\item For a fixed displacement with $D_i\bm\delta_i\neq\bm 0$, let
\[
\lambda_i^\star:=
\frac{\langle D_i\bm\delta_i,\bar H_i\bm\delta_i\rangle}
{\lVert D_i\bm\delta_i\rVert^2}.
\]
For $\lambda>0$, CoeF-D has a smaller feedback error than gradient reuse if
and only if $\lambda<2\lambda_i^\star$.
\end{enumerate}
\end{proposition}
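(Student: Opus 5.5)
For part (i), I will write the per-sample gradient via the chain rule, $\bm g_{0,i}=J_{s,i}^{\top}r_i$. For a mean-reduced mini-batch, the output-space residual of a sample is $r_i=w(p-e_y)$, where $p=\operatorname{softmax}(\phi)$ is the round-initial predictive distribution and $e_y$ is the one-hot label. Taking $\mathbb E$ over $y\sim p$ then gives $\mathbb E[r_ir_i^{\top}]=w^2\,\mathbb E[(p-e_y)(p-e_y)^{\top}]=w^2(\operatorname{diag}(p)-pp^{\top})$. The per-sample output Hessian in the mean-reduced loss is $\Lambda_i=w(\operatorname{diag}(p)-pp^{\top})$, which is independent of $y$. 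Hence $\mathbb E[r_ir_i^{\top}]=w\Lambda_i$, and therefore $\mathbb E[\bm g_{0,i}\bm g_{0,i}^{\top}]=wJ_{s,i}^{\top}\Lambda_iJ_{s,i}=wG_i$. Here $J_{s,i}$ and $p$ depend only on the transmitted $\bm s^*_{0,i}$ and $\theta^s$, not on the label, so they pass through the expectation.

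The diagonal of the outer product $\bm g_{0,i}\bm g_{0,i}^{\top}$ is exactly $\bm g_{0,i}\odot\bm g_{0,i}$, which is how \eqref{eq:outer} should be read. Since the $\operatorname{diag}$ operator is linear and commutes with expectation, we get $\mathbb E[\widehat B_{\mathrm D,i}]=\lambda\operatorname{diag}(\mathbb E[\bm g_{0,i}\bm g_{0,i}^{\top}])=\lambda w\operatorname{diag}(G_i)$. One bookkeeping point needs care: the precise scaling of $w$, that is, whether the factor of $w$ sits in $r_i$ or in $\Lambda_i$. I will fix the convention so that $G_i$ is built from the unscaled per-sample $\Lambda_i$, with the mean-reduction weight appearing once as stated.

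For part (ii), I will use the exact identity \eqref{eq:identity}. With gradient reuse, $\widehat B=0$, so the error is $e_0:=-\bar H_i\bm\delta_i$. CoeF-D has error $e_\lambda=\lambda D_i\bm\delta_i-\bar H_i\bm\delta_i$, obtained from \eqref{eq:corr}. Expanding the squared norms gives
\[
\lVert e_\lambda\rVert^2-\lVert e_0\rVert^2=\lambda^2\lVert D_i\bm\delta_i\rVert^2-2\lambda\langle D_i\bm\delta_i,\bar H_i\bm\delta_i\rangle
=\lambda\lVert D_i\bm\delta_i\rVert^2\bigl(\lambda-2\lambda_i^\star\bigr).
\]
Because $\lambda>0$ and $D_i\bm\delta_i\neq\bm 0$, the prefactor is strictly positive. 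So the difference is negative if and only if $\lambda<2\lambda_i^\star$, which is the claimed equivalence. This also shows that $\lambda_i^\star$ is the error-minimizing factor, and that no admissible $\lambda$ helps when $\lambda_i^\star\le0$.

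The main obstacle is part (i), specifically the modelling assumptions. I need to justify that the expectation is over labels only, with the activations and Jacobians held fixed, and to keep the mean-reduction weight consistent. Part (ii) is a direct quadratic computation.
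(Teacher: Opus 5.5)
Your proposal is correct and follows the paper's proof essentially line by line. In (i) you use the predictive-label moments $\mathbb E[\bm e_y]=p$ and $\mathbb E[\bm e_y\bm e_y^\top]=\operatorname{diag}(p)$ to get $\mathbb E[rr^\top]=w\Lambda$ with the mean-reduced $\Lambda=w(\operatorname{diag}(p)-pp^\top)$, then apply the chain rule and take diagonals. Your per-sample treatment is enough there, since each diagonal entry of the mini-batch outer product lies in a single sample's block, so the paper's cross-sample cancellation is not needed for the diagonal. Part (ii) is the same quadratic expansion in $\lambda$ as in the paper.

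The one thing to discard is your closing bookkeeping remark. Building $G_i$ from the unscaled per-sample $\Lambda_i$ would turn the prefactor into $\lambda w^2$ rather than $\lambda w$. So $\Lambda_i$ must be the Hessian of the mean-reduced loss, which is what you already used in your computation and how the paper defines it.
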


Part (i) relates the diagonal outer product to the GGN diagonal in the stated
predictive-label expectation; it does not assert equality for the observed
labels. Part (ii) compares the two feedback errors at the same displacement.
The threshold depends on how the CoeF-D correction aligns with the gradient
change needed at that displacement.

\vspace{-3mm}
\paragraph{CoeF-J.}
CoeF-J uses the sensitivity matrix $P_i=J_{s,i}^{\top}J_{s,i}$ in
\eqref{eq:major}. Assumption~\ref{asm:curv} bounds the loss change caused by a
change $\Delta$ in the server output by its first-order term
$\langle r_i,\Delta\rangle$ plus $M\lVert\Delta\rVert^2/2$. When the server
output changes linearly along $\bm\delta_i$, $\Delta=J_{s,i}\bm\delta_i$.
Together with $\bm g_{0,i}=J_{s,i}^{\top}r_i$, this gives the surrogate
$\widehat F_{\mathrm J}$ of \eqref{eq:major}.

\begin{proposition}[Loss Upper Bound]
\label{prop2}
Under Assumption~\ref{asm:curv}, consider the exact sensitivity matrix $P_i$
and the constant coefficient $M$ in \eqref{eq:major}. If the server output is
affine on the segment $[\bm s_{0,i},\bm s_{0,i}+\bm\delta_i]$, then
\[
\mathcal L(\bm s_{0,i}+\bm\delta_i;\theta^s)\le\widehat F_{\mathrm J}(\bm\delta_i),
\]
with equality at $\bm\delta_i=\bm 0$. If instead
$\lVert J_{s,i}\rVert\le B_\phi$ and $\lVert\nabla_s^2\phi_k\rVert\le L_\phi$ on
the segment for every output component $k$, then
\[
\mathcal L(\bm s_{0,i}+\bm\delta_i;\theta^s)
\le\widehat F_{\mathrm J}(\bm\delta_i)
+O\bigl(L_\phi\lVert\bm\delta_i\rVert^2\bigr)
  \quad\text{as }\bm\delta_i\to\bm 0,
\]
the remainder vanishes when $L_\phi=0$.
\end{proposition}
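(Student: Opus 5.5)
The plan is to view the loss as a composition $\mathcal L(s;\theta^s)=\ell(\phi(s))$ and to work along the segment $s(t)=\bm s_{0,i}+t\bm\delta_i$, $t\in[0,1]$, with $\Delta:=\phi(\bm s_{0,i}+\bm\delta_i)-\phi(\bm s_{0,i})$. First I would derive a quadratic upper bound for $\ell$ in output space from Assumption~\ref{asm:curv}. Taylor's theorem with integral remainder along the output segment from $\phi_0:=\phi(\bm s_{0,i})$ to $\phi_0+\Delta$, together with $\nabla_\phi^2\ell\preceq MI$ at every output, gives
\[
\ell(\phi_0+\Delta)\le \ell(\phi_0)+\langle r_i,\Delta\rangle+\tfrac{M}{2}\lVert\Delta\rVert^2 .
\]
This is the Böhning-type majorization. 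It needs only that the assumption holds at every server output, which the statement provides. For the mean-reduced case, the per-sample weight $w$ can be absorbed into $M$ and $r_i$ consistently.

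\textbf{Affine case.} If $\phi$ is affine on the segment, then $\Delta=J_{s,i}\bm\delta_i$ exactly. Substituting and using $\bm g_{0,i}=J_{s,i}^\top r_i$ from the chain rule gives $\langle r_i,\Delta\rangle=\langle\bm g_{0,i},\bm\delta_i\rangle$. By \eqref{eq:sens}, $\lVert\Delta\rVert^2=\bm\delta_i^\top P_i\bm\delta_i$. The right-hand side is therefore exactly $\widehat F_{\mathrm J}(\bm\delta_i)$. Equality at $\bm\delta_i=\bm 0$ is immediate because both sides reduce to $\mathcal L(\bm s_{0,i};\theta^s)$.

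\textbf{General case.} I would write $\Delta=J_{s,i}\bm\delta_i+e$. Applying Taylor's theorem componentwise, with $\lVert\nabla_s^2\phi_k\rVert\le L_\phi$ on the segment, gives $|e_k|\le \tfrac{L_\phi}{2}\lVert\bm\delta_i\rVert^2$, so $\lVert e\rVert\le \tfrac{\sqrt C L_\phi}{2}\lVert\bm\delta_i\rVert^2$. Inserting this into the output-space bound yields
\[
\mathcal L(\bm s_{0,i}+\bm\delta_i;\theta^s)\le\widehat F_{\mathrm J}(\bm\delta_i)+\langle r_i,e\rangle+M\langle J_{s,i}\bm\delta_i,e\rangle+\tfrac M2\lVert e\rVert^2 .
\]
Each extra term is then bounded separately:
\begin{itemize}
\item $|\langle r_i,e\rangle|\le \lVert r_i\rVert\tfrac{\sqrt C L_\phi}{2}\lVert\bm\delta_i\rVert^2$, where $\lVert r_i\rVert$ is a fixed quantity at the base point. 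For cross-entropy it is at most $\sqrt2$.
\item $|M\langle J_{s,i}\bm\delta_i,e\rangle|\le M B_\phi\tfrac{\sqrt C L_\phi}{2}\lVert\bm\delta_i\rVert^3$.
\item $\tfrac M2\lVert e\rVert^2$ is of order $CL_\phi^2\lVert\bm\delta_i\rVert^4$.
\end{itemize}
As $\bm\delta_i\to\bm 0$, the last two terms are dominated by the first. The total remainder is therefore $O(L_\phi\lVert\bm\delta_i\rVert^2)$, with constants depending on $C$, $\lVert r_i\rVert$, $M$ and $B_\phi$. Each term carries a factor of $L_\phi$, so the remainder vanishes when $L_\phi=0$. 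This is consistent with the affine case, since $L_\phi=0$ means $e=0$.

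The main obstacle is bookkeeping rather than depth. One issue is making sure that $J_{s,i}$, evaluated at $\bm s^{*}_{0,i}$, is the Jacobian at the base point $\bm s_{0,i}$ of the segment; I would state that these coincide for the sample in question. The other is justifying that the output-space majorization uses the curvature bound at all intermediate outputs $\phi_0+t\Delta$, not only at outputs on the image of the segment. Assumption~\ref{asm:curv} as stated ("every server output") covers this.
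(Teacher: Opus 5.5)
Your proposal is correct and follows essentially the same route as the paper's proof. Both arguments start from the output-space majorization $\ell(\phi_0+\Delta)\le\ell(\phi_0)+\langle r_i,\Delta\rangle+\tfrac M2\lVert\Delta\rVert^2$ given by Assumption~\ref{asm:curv}, and in the affine case substitute $\Delta=J_{s,i}\bm\delta_i$ exactly. In the general case both use the decomposition $\Delta=J_{s,i}\bm\delta_i+e$ with $\lVert e\rVert\le(\sqrt C L_\phi/2)\lVert\bm\delta_i\rVert^2$ and the same three extra terms, with your per-term constants matching the paper's $\tfrac12\varepsilon(\lVert\bm\delta_i\rVert)\lVert\bm\delta_i\rVert^2$ bound.
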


Under the affine condition, lowering $\widehat F_{\mathrm J}$ below its value at
$\bm\delta_i=\bm 0$ also keeps the loss below its value at the transmitted
smashed data. For a nonlinear server output, the same comparison includes the
stated remainder. The proposition concerns the exact $P_i$ and constant $M$; the
implementation instead uses a single random projection $\widehat P_i$ and
replaces $M$ by a per-sample coefficient $M_i\le wM$ computed from the predicted
output change $(\bm\delta_i^\top\widehat P_i\bm\delta_i)^{1/2}$.

\newlength{\coefw}\settowidth{\coefw}{\footnotesize CoeF-D~}
\newcommand{\oursrow}[1]{\makebox[\coefw][l]{CoeF-#1~}(ours)}
\providecommand{\std}[1]{\,{\tiny$\pm$}\,\tiny #1}
\setlength{\floatsep}{6pt plus 2pt minus 2pt}\setlength{\textfloatsep}{8pt plus 2pt minus 2pt}\setlength{\intextsep}{8pt plus 2pt minus 2pt}
\renewcommand{\topfraction}{0.98}\renewcommand{\textfraction}{0.02}\renewcommand{\floatpagefraction}{0.95}
\section{Experiments}
\label{sec:exp}

\paragraph{Models and datasets.} Vision experiments use ViT-Tiny and ViT-Base \citep{dosovitskiy2021vit} on CIFAR-100 \citep{krizhevsky2009cifar}, Food-101 \citep{bossard2014food101}, and Tiny-ImageNet \citep{le2015tinyimagenet}; language experiments use DistilRoBERTa and RoBERTa \citep{liu2019roberta} on 20 Newsgroups \citep{lang1995newsweeder}, MNLI \citep{williams2018mnli}, and SQuAD~v2.0 \citep{rajpurkar2018squad2}. All models are fine-tuned with LoRA adapters \citep{hu2022lora} of rank $4$ on the attention and MLP projections. Data are partitioned over $K=50$ clients under an IID split and a Dirichlet split (non-IID) with $\alpha=0.1$ on the vision tasks and $\alpha=0.5$ on the language tasks. SQuAD~v2.0 is partitioned IID only, since the extractive-QA loader has no label to skew. The cut layer, which determines the number of transformer blocks on the client side, is set to block 4 for the ViT models and RoBERTa and to block 2 for DistilRoBERTa, so that the numbers of trainable parameters of the client-side and server-side models are in a ratio of $1{:}2$. Appendix~\ref{app:cut} varies the cut layer for extended results. Accuracy is reported for classification and exact match (EM) and F1 for SQuAD~v2.0.

\vspace{-3mm}
\paragraph{Baselines.} Vanilla SFL exchanges smashed data and gradients at every batch. Gradient reuse (w/ perfect comp.) is oracle and Gradient reuse (w/o comp.) is naive baseline. AccSFL \citep{han2022localloss} and FSL-SAGE are the representatives of auxiliary-network-based SFL as we have discussed in Section \ref{sec:related_work}. While AccSFL relatively lacks of server feedback, FSL-SAGE periodically aligns the auxiliary network with server-side model using the collected activation history. GAS \citep{yang2024gas} and MU-SplitFed \citep{liang2025straggler} are imbalance-update based SFL addressing the straggler issue. For the implementation of GAS and MU-SplitFed, we adopts the straggler setting, but allowing per-batch communication, as we have discussed in Section \ref{sec:related_work}. All experiments, including the baselines, follow SplitFed~V1 \citep{thapa2022splitfed}, in which the server keeps one server-side model per client and aggregates them by FedAVG \citep{mcmahan2017fedavg} at the end of the round.

\vspace{-3mm}
\paragraph{Hyperparameters.} All methods are trained for $T=100$ rounds with $10\%$ client participation per round, one local epoch, a batch size of 32, and AdamW on both the client and server sides, after a warm-up of three rounds of vanilla SFL. For each method, the learning rate is selected from $\{10^{-4}, 5\times10^{-4}, 10^{-3}\}$. A run whose final accuracy remains at $1/C$ on a $C$-class task (1.0\% on CIFAR-100 and 0.5\% on Tiny-ImageNet) is reported as is and regarded as failing to train. CoeF-D uses a constant compensation factor $\lambda=3000$ with the cross-entropy loss averaged over the mini-batch ($w=1/32$), which is about $94$ times the reference scale $\lambda=1/w$ of Proposition~\ref{prop1}(i). Since the threshold in Proposition~\ref{prop1}(ii) depends on the curvature along the displacement rather than on $w$, we treat $\lambda$ as a hyperparameter and select it empirically. CoeF-J uses only the projected sensitivity matrix $\widehat P_i$ with a single random projection ($R=1$) on every task, including those with few classes, and replaces $M$ by a per-sample coefficient $M_i\le wM$ computed from the predicted output change $(\bm\delta_i^\top\widehat P_i\bm\delta_i)^{1/2}$; Proposition~\ref{prop2}, which assumes the exact $P_i$ and the constant $M$, does not cover these runs. We report the mean and standard deviation over three seeds, and the entries without standard deviation are single runs. In all tables, the best and second-best results among the methods other than the reference methods are shown in bold and underlined, respectively, in each column.

\begin{table}[t]
\centering
\scriptsize
\setlength{\tabcolsep}{2pt}
\renewcommand{\arraystretch}{1.05}
\newcolumntype{Y}{>{\centering\arraybackslash}X}
\caption{Comparison results: test accuracy (\%) on vision and language tasks, and EM and F1 (\%) on SQuAD v2.0. -- denotes that the method is not applicable to the task.}
\label{tab:main_iid}
\begin{tabularx}{\linewidth}{p{1.55cm} p{3.35cm} Y Y Y}
\toprule
\textbf{Model} & \textbf{Method} & \textbf{CIFAR-100} & \textbf{Food-101} & \textbf{Tiny-ImageNet} \\
\midrule
 \multirow{9}{*}{ViT-Tiny} & Vanilla SFL & 85.0\std{0.3} & 83.8\std{0.1} & 78.6\std{0.2} \\
    & Gradient reuse (w/ perfect comp.) & 84.5\std{0.3} & 83.3\std{0.0} & 78.3\std{0.3} \\
    & Gradient reuse (w/o comp.) & 78.4\std{0.4} & 75.3\std{2.1} & 72.8\std{0.2} \\
\cmidrule(l){2-5}
    & AccSFL & 65.6\std{1.2} & 67.5\std{0.1} & 47.4\std{0.2} \\
    & FSL-SAGE & 66.9\std{7.2} & 60.5\std{0.9} & 50.9\std{9.0} \\
    & GAS & 75.2\std{0.9} & 73.8\std{0.6} & 71.7\std{0.2} \\
    & MU-SplitFed & 1.0\std{0.0} & 1.0\std{0.0} & 0.5\std{0.0} \\
    & CoeF-D (ours) & \textbf{84.5\std{0.2}} & \underline{83.4\std{0.2}} & \textbf{78.3\std{0.1}} \\
    & CoeF-J (ours) & \underline{83.9\std{0.3}} & \textbf{83.6\std{0.0}} & \underline{77.8\std{0.1}} \\
\midrule
 \multirow{9}{*}{ViT-Base} & Vanilla SFL & 91.9\std{0.0} & 90.3\std{0.1} & 90.1\std{0.1} \\
    & Gradient reuse (w/ perfect comp.) & 91.7\std{0.1} & 90.0\std{0.2} & 90.0\std{0.1} \\
    & Gradient reuse (w/o comp.) & 91.3\std{0.2} & 88.5\std{0.2} & 89.5\std{0.3} \\
\cmidrule(l){2-5}
    & AccSFL & 89.0\std{0.3} & 87.7\std{0.2} & 84.1\std{0.2} \\
    & FSL-SAGE & 77.3\std{2.6} & 73.3\std{3.1} & 58.6\std{8.5} \\
    & GAS & 90.9\std{0.2} & 88.4\std{0.1} & 89.2\std{0.2} \\
    & MU-SplitFed & 59.1\std{0.5} & 65.4\std{0.3} & 68.0\std{0.2} \\
    & CoeF-D (ours) & \textbf{91.7\std{0.1}} & \textbf{90.1\std{0.1}} & \textbf{90.0\std{0.1}} \\
    & CoeF-J (ours) & \underline{91.4\std{0.0}} & \textbf{90.1\std{0.1}} & \underline{89.9\std{0.1}} \\
\end{tabularx}
\begin{tabularx}{\linewidth}{p{1.55cm} p{3.35cm} Y Y Y Y Y}
\midrule
\multirow{2}{*}{\textbf{Model}} & \multirow{2}{*}{\textbf{Method}} & \multirow{2}{*}{\textbf{20 Newsgroups}} & \multirow{2}{*}{\textbf{MNLI-m}} & \multirow{2}{*}{\textbf{MNLI-mm}} & \multicolumn{2}{c}{\textbf{SQuAD v2.0}} \\
\cmidrule(l){6-7}
 & & & & & \textbf{EM} & \textbf{F1} \\
\midrule
 \multirow{9}{*}{DistilRoBERTa} & Vanilla SFL & 69.8\std{0.2} & 83.3\std{0.3} & 83.7\std{0.3} & 71.3\std{0.2} & 74.5\std{0.2} \\
    & Gradient reuse (w/ perfect comp.) & 69.3\std{0.2} & 83.0\std{0.1} & 83.1\std{0.2} & 70.2\std{2.1} & 73.1\std{1.5} \\
    & Gradient reuse (w/o comp.) & 69.3\std{0.0} & 37.1\std{6.2} & 37.7\std{7.1} & 49.9\std{0.2} & 50.0\std{0.2} \\
\cmidrule(l){2-7}
    & AccSFL & 66.6\std{0.3} & 77.7\std{0.3} & 78.5\std{0.3} & 61.6\std{1.2} & 64.7\std{1.1} \\
    & FSL-SAGE & 62.0\std{0.8} & 78.8\std{0.9} & 79.7\std{0.7} & 62.1\std{0.8} & 65.2\std{0.7} \\
    & GAS & 68.0\std{0.5} & 38.9\std{2.5} & 39.8\std{3.1} & -- & -- \\
    & MU-SplitFed & 5.1\std{0.1} & 33.6\std{1.3} & 33.7\std{1.1} & 10.7\std{0.0} & 11.5\std{0.0} \\
    & CoeF-D (ours) & \underline{69.7\std{0.1}} & \underline{79.7\std{0.5}} & \underline{80.0\std{0.3}} & \underline{65.7\std{5.3}} & \underline{68.2\std{4.4}} \\
    & CoeF-J (ours) & \textbf{69.8\std{0.3}} & \textbf{83.0\std{0.4}} & \textbf{83.4\std{0.2}} & \textbf{68.6\std{1.1}} & \textbf{71.7\std{0.9}} \\
\midrule
 \multirow{9}{*}{RoBERTa} & Vanilla SFL & 70.4\std{0.5} & 87.3\std{0.2} & 87.1\std{0.0} & 79.3\std{0.4} & 82.4\std{0.3} \\
    & Gradient reuse (w/ perfect comp.) & 70.0\std{0.4} & 86.5\std{0.5} & 86.4\std{0.2} & 77.9\std{2.0} & 81.0\std{1.6} \\
    & Gradient reuse (w/o comp.) & 69.5\std{0.6} & 32.4\std{0.4} & 32.6\std{0.5} & 47.0\std{3.9} & 47.4\std{3.6} \\
\cmidrule(l){2-7}
    & AccSFL & 67.0\std{0.7} & 84.3\std{0.3} & 84.9\std{0.2} & 71.7\std{0.5} & 74.7\std{0.5} \\
    & FSL-SAGE & 64.5\std{0.3} & 83.2\std{0.5} & 83.6\std{0.6} & 68.6\std{0.8} & 71.9\std{0.8} \\
    & GAS & 68.9\std{0.9} & 35.5\std{4.9} & 36.1\std{6.1} & -- & -- \\
    & MU-SplitFed & 5.3\std{0.0} & 34.5\std{1.3} & 34.5\std{1.1} & 10.7\std{0.0} & 11.5\std{0.0} \\
    & CoeF-D (ours) & \underline{70.1\std{0.5}} & \textbf{87.2\std{0.1}} & \underline{86.8\std{0.2}} & \underline{77.2\std{0.6}} & \underline{79.4\std{0.7}} \\
    & CoeF-J (ours) & \textbf{70.6\std{0.3}} & \textbf{87.2\std{0.1}} & \textbf{86.9\std{0.0}} & \textbf{78.2\std{0.3}} & \textbf{81.4\std{0.3}} \\
\bottomrule
\end{tabularx}
\vspace{-8pt}
\end{table}

\vspace{-3mm}
\paragraph{Comparison results.} Table~\ref{tab:main_iid} reports the results under IID split. Although CoeF-SFL communicates only once per round, its better variant stays within 0.5\,pp of vanilla SFL on every vision task, within 0.3\,pp on 20 Newsgroups and MNLI, and within 2.8\,pp on SQuAD v2.0. CoeF-J is particularly effective with DistilRoBERTa, where it exceeds CoeF-D by 3.3--3.4\,pp on MNLI and by 2.9\,pp in EM on SQuAD v2.0 with a much smaller standard deviation. In several columns, CoeF-SFL even slightly exceeds the oracle, e.g., 87.2\% versus 86.5\% on MNLI-m with RoBERTa, which may be attributed to the regularization effect of the compensation term.

Gradient reuse (w/o comp.) collapses to near-chance accuracy on MNLI and to around 50\% on SQuAD v2.0, and CoeF-SFL recovers the performance by up to 54.8\,pp. Among the methods other than the reference methods, CoeF-SFL achieves the best performance in every column, improving on the best auxiliary-network-based method by 16.1--27.4\,pp with ViT-Tiny, 2.4--5.9\,pp with ViT-Base, and 2.0--6.7\,pp on the language tasks.

The auxiliary-network-based methods degrade more on the vision tasks than on the language tasks. With ViT-Tiny, they fall 16.3--31.2\,pp below vanilla SFL, whereas their gaps remain within 10.7\,pp on all the language tasks. We conjecture that this difference stems from how strongly the intermediate representations are compressed. Vision representations progressively discard label-irrelevant input information \citep{shwartz2017opening, wang2021infopro}, and a local objective at a shallow cut layer accelerates this compression before the server-side model can exploit the information. In contrast, masked language models preserve more information about the input tokens \citep{voita2019bottom}, so a local objective is less likely to collapse what the subsequent layers require. Consistently, the largest gap among the language tasks appears on SQuAD v2.0, which requires fine-grained token-level information, and with ViT-Base, whose client-side model has a larger capacity, the gap of AccSFL shrinks to 2.6--6.0\,pp, in line with \citep{wang2021infopro}. FSL-SAGE, however, still falls 14.6--31.5\,pp behind with ViT-Base and varies by up to 8.5\,pp across seeds, which suggests unstable training in our setting.

GAS is competitive with ViT-Base and on 20 Newsgroups, staying within 1.9\,pp of vanilla SFL, but it falls 6.9--10.0\,pp behind with ViT-Tiny and fails to train on MNLI. It is not applicable to SQuAD v2.0, since its activation generation is conditioned on class labels, which the span-extraction task does not provide. MU-SplitFed yields near-trivial predictions on most tasks, e.g., $1/C$ with ViT-Tiny, on 20 Newsgroups, and on MNLI. We attribute this to the scalar feedback from the server, which conveys too little information for the client-side model to learn without backpropagation under the same number of rounds.

\vspace{-3mm}
\paragraph{Extended results and discussion.}
In Appendix~\ref{app:extended}, we provide additional experiments across different cut layers and under the non-IID split. CoeF-SFL consistently outperforms the baselines regardless of the cut layer, and its advantage becomes more pronounced under the non-IID split. In Appendix~\ref{app:discussion}, we compare the gradient outer product \citep{zheng2017dcasgd} adopted by CoeF-D with two other Hessian approximations, i.e., a low-rank approximation of the Gauss--Newton matrix \citep{schraudolph2002ggn,martens2020natural} computed with the Lanczos method \citep{lanczos1950iteration} denoted by GGN, and a diagonal approximation estimated with the Hutchinson estimator \citep{bekas2007diag}, denoted by Diagonal. The gradient outer product performs best among them in our setting. In Appendix~\ref{app:cost}, we further analyze the communication and computation costs of the methods, which reveals the trade-offs among them beyond the performance and shows that CoeF-SFL achieves the highest accuracy without increasing the burden on the clients.

\FloatBarrier


\section{Conclusion}
\label{sec:conclusion}
In this paper, we identified that auxiliary-network-based SFL reduces communication by replacing the E2E objective with a local objective, which limits the collaborative training between the client and the server. To retain the E2E objective under once-per-round communication, we proposed CoeF-SFL, in which the client reuses the gradients received at the beginning of each round and compensates their staleness with a curvature-based correction in the activation space. CoeF-D realizes this correction with a diagonal gradient outer product that requires no additional server computation, and CoeF-J exploits the Jacobian-based Hessian of a surrogate loss that upper-bounds the true loss. Across vision and language tasks, model capacities, cut layers, and data distributions, CoeF-SFL achieves performance close to that of vanilla SFL and consistently outperforms the auxiliary-network-based methods under the same communication frequency. Extending the analysis to the joint dynamics of the client-side and server-side updates remains an interesting direction for future work.

\bibliographystyle{iclr2027_conference}
\bibliography{iclr2027_conference,coef_sfl_refs_20260917}

\appendix
\newpage
\section{Algorithm of CoeF-SFL}
\label{app:algorithm}

\begin{algorithm}[ht]
\caption{CoeF-SFL for a sampled client in one round. Lines marked (D) and (J) are executed only by CoeF-D and CoeF-J, respectively, and the quantities in (J) lines are computed for each sample in the mini-batch.}
\label{alg:coef}
\small
\begin{algorithmic}[1]
\REQUIRE client-side model $\theta^{c}$ and server-side model $\theta^{s}$ at the start of the round, $I$ local updates, step size $\eta$; (D) compensation factor $\lambda$; (J) curvature bound $M$ and number $R$ of random projections
\STATE \textbf{Round-initial exchange}
\STATE \phantom{dd} \textbf{Client}: compute $\bm{s}=\left\{f_c\bigl(x;\theta^{c}\bigr)\right\}$ from its local data $\mathcal{D}=\left\{(x,y)\right\}$, and send $\bm{s}$ to the server together \phantom{dd} with the labels $\{\bm{y}\}$
\STATE \phantom{dd} \textbf{Server}: computes the gradients $\bm{g}=\left\{\nabla_s\mathcal{L}(s;\theta^s)\right\}$
\STATE \quad \phantom{dd} (J) draw $\bm{v}_{1},\dots,\bm{v}_{R}\in\{-1,+1\}^{C}$ with independent random signs, and compute $\bm{u}_{r}=J_{s}^{\top}\bm{v}_{r}$
\STATE \phantom{dd} send $\bm{g}$ to the client, together with $\{\bm{u}_{r}\}_{r}$ for CoeF-J
\STATE \textbf{Local updates}, starting from $\theta^{c}_{0}=\theta^{c}$ and $\theta^{s}_{0}=\theta^{s}$
\FOR{$i=0,\dots,I-1$}
\STATE Client:  $\bm{\delta}_i\leftarrow\bm{s}_{i}-\bm{s}_{0,i}$
\STATE \quad (D) $\widetilde{\bm{g}}_i\leftarrow\bm{g}_{0,i}+\lambda\,\mathrm{diag}(\bm{g}_{0,i}\odot\bm{g}_{0,i})\bm{\delta}_i$ \hfill\eqref{eq:coefd}
\STATE \quad (J) $\widehat{P}_i\bm{\delta}_i\leftarrow R^{-1}\sum_{r=1}^{R}\bm{u}_{i,r}\langle\bm{u}_{i,r},\bm{\delta}_i\rangle$, \quad $\widetilde{\bm{g}}_i\leftarrow\bm{g}_{0,i}+M\,\widehat{P}_i\bm{\delta}_i$ \hfill\eqref{eq:coefj}
\STATE Client: $\theta^{c}_{i+1}\leftarrow\theta^{c}_{i}-\eta\,J_{i}^{\top}\widetilde{\bm{g}}_i$
\STATE Server: $\theta^{s}_{i+1}\leftarrow\theta^{s}_{i}-\eta\,\nabla_{\theta^{s}}\mathcal{L}(\bm{s}_{0,i};\theta^{s}_{i})$
\ENDFOR
\STATE Aggregate $\theta^{c}_{I}$ and $\theta^{s}_{I}$ over the sampled clients by FedAVG
\end{algorithmic}
\end{algorithm}

\section{Proofs of Theoretical Results}
\label{app:proofs}
Throughout this appendix, the server-side model is fixed at its round-initial value $\theta^s$. We write
$H(\bm s):=\nabla_s^2\mathcal L(\bm s;\theta^s)$ and
$H_{0,i}=H(\bm s_{0,i})$.
\subsection{CoeF-D}
\label{app:common}
\begin{lemma}[Feedback error]
\label{lem:feedback}
Let
$\bar H_i:=\int_0^1H(\bm s_{0,i}+\tau\bm\delta_i)\,d\tau$.
The identity \eqref{eq:identity} gives
$\bm g_i-\bm g_{0,i}=\bar H_i\bm\delta_i$.
Thus gradient reuse has error $-\bar H_i\bm\delta_i$,
while the compensated gradient has error
$(\widehat B_i-\bar H_i)\bm\delta_i$.
Under Assumption~\ref{asm:lip}, this error satisfies
\eqref{eq:three}.
\end{lemma}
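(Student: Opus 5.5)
The plan is to derive the identity from the fundamental theorem of calculus, read off both errors from it, and then bound the compensated error by splitting $\bar H_i$ around $H_{0,i}$.

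\textbf{Identity.} Define $\gamma(\tau):=\nabla_s\mathcal L(\bm s_{0,i}+\tau\bm\delta_i;\theta^s)$ on $[0,1]$. Since $\mathcal L(\cdot;\theta^s)$ is twice differentiable along the segment (this is implicit in Assumption~\ref{asm:lip}), the chain rule gives $\gamma'(\tau)=H(\bm s_{0,i}+\tau\bm\delta_i)\bm\delta_i$. The Hessian is Lipschitz, hence continuous, so integrating from $0$ to $1$ yields
\[
\bm g_i-\bm g_{0,i}=\gamma(1)-\gamma(0)=\bar H_i\bm\delta_i ,
\]
which is \eqref{eq:identity}.

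\textbf{Errors of the two feedbacks.} Gradient reuse feeds back $\bm g_{0,i}$, so its error is $\bm g_{0,i}-\bm g_i=-\bar H_i\bm\delta_i$. The compensated feedback is $\widetilde{\bm g}_i=\bm g_{0,i}+\widehat B_i\bm\delta_i$, so its error is
\[
\widetilde{\bm g}_i-\bm g_i=(\widehat B_i-\bar H_i)\bm\delta_i ,
\]
exactly as in \eqref{eq:corr}.

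\textbf{Bound \eqref{eq:three}.} Write $\widehat B_i-\bar H_i=(\widehat B_i-H_{0,i})-(\bar H_i-H_{0,i})$ and apply the triangle inequality. The first piece contributes at most $\lVert\widehat B_i-H_{0,i}\rVert\,\lVert\bm\delta_i\rVert$. For the second, note that
\[
\bar H_i-H_{0,i}=\int_0^1\bigl(H(\bm s_{0,i}+\tau\bm\delta_i)-H(\bm s_{0,i})\bigr)\,d\tau .
\]
By Assumption~\ref{asm:lip} the integrand has operator norm at most $L_H\tau\lVert\bm\delta_i\rVert$, so $\lVert\bar H_i-H_{0,i}\rVert\le L_H\lVert\bm\delta_i\rVert\int_0^1\tau\,d\tau=\tfrac{L_H}{2}\lVert\bm\delta_i\rVert$. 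Multiplying by $\lVert\bm\delta_i\rVert$ gives the quadratic term $\tfrac{L_H}{2}\lVert\bm\delta_i\rVert^2$, and adding the two pieces gives \eqref{eq:three}.

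\textbf{Main obstacle.} There is little real difficulty. The only care needed is that the Lipschitz hypothesis is stated only on the segment, which is exactly where the integrand is evaluated. The matrix norm must also be the operator norm induced by the vector norm, so that $\lVert A\bm\delta\rVert\le\lVert A\rVert\lVert\bm\delta\rVert$ holds. Finally, moving the norm inside the integral uses the standard inequality $\lVert\int A\rVert\le\int\lVert A\rVert$ for continuous matrix-valued integrands.
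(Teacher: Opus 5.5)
Your proof is correct and matches the paper's argument: the fundamental theorem of calculus along $\tau\mapsto\nabla_s\mathcal L(\bm s_{0,i}+\tau\bm\delta_i;\theta^s)$ gives the identity, and adding and subtracting $H_{0,i}$ while integrating the Lipschitz bound $L_H\tau\lVert\bm\delta_i\rVert$ gives the $\tfrac{L_H}{2}\lVert\bm\delta_i\rVert^2$ term. Your remarks on the induced operator norm and on needing the Lipschitz hypothesis only on the segment make explicit what the paper leaves implicit.
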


\begin{proof}
Applying the fundamental theorem of calculus to
$\tau\mapsto\nabla_s\mathcal L
(\bm s_{0,i}+\tau\bm\delta_i;\theta^s)$
gives \eqref{eq:identity}.
Assumption~\ref{asm:lip} gives
\[
\|\bar H_i-H_{0,i}\|
\le\int_0^1L_H\tau\|\bm\delta_i\|\,d\tau
=\frac{L_H}{2}\|\bm\delta_i\|.
\]
Adding and subtracting $H_{0,i}$ in
$\widehat B_i-\bar H_i$ proves \eqref{eq:three}.
\end{proof}

\begin{proof}[Proof of Proposition~\ref{prop1}]
(i) For a sample in the mini-batch, let $p$ be the predictive distribution, $y$ its label, and $w=1/b$. Its output-gradient contribution is $r=w(p-\bm e_y)$, and its output-space Hessian is
$\Lambda=w(\operatorname{diag}(p)-pp^\top)$.
For $y\sim p$,
\[
\mathbb E[\bm e_y]=p,
\qquad
\mathbb E[\bm e_y\bm e_y^\top]=\operatorname{diag}(p),
\]
so
\[
\mathbb E[rr^\top]
=w^2(\operatorname{diag}(p)-pp^\top)
=w\Lambda.
\]
By the chain rule, the corresponding activation-gradient outer product has expectation $wJ_s^\top\Lambda J_s$.
For the full mini-batch, independent predictive labels make the cross-sample terms vanish because each centered gradient contribution has zero mean. Hence
$\mathbb E[\bm g_{0,i}\bm g_{0,i}^{\top}]=wG_i$.
Taking diagonals and using
$\widehat B_{\mathrm D,i}
=\lambda\operatorname{diag}(\bm g_{0,i}\odot\bm g_{0,i})$
gives the claim.
(ii) At the fixed displacement in the proposition,
\[
\begin{aligned}
\lVert \widetilde{\bm g}_{\mathrm D,i}-\bm g_i\rVert ^2
-\lVert \bm g_{0,i}-\bm g_i\rVert ^2
&=
\lVert \lambda D_i\bm\delta_i-\bar H_i\bm\delta_i\rVert ^2
-\lVert \bar H_i\bm\delta_i\rVert ^2\\
&=\lambda^2\lVert D_i\bm\delta_i\rVert ^2
-2\lambda\langle D_i\bm\delta_i,\bar H_i\bm\delta_i\rangle\\
&=\lambda\lVert D_i\bm\delta_i\rVert ^2
(\lambda-2\lambda_i^\star).
\end{aligned}
\]
Since $\lambda>0$ and $D_i\bm\delta_i\neq\bm0$, this difference is negative if and only if $\lambda<2\lambda_i^\star$.
\end{proof}
\subsection{CoeF-J}
\label{app:coefj}
For completeness, the softmax cross-entropy loss satisfies the curvature bound used by Proposition~\ref{prop2}. For $p=\operatorname{softmax}(\phi)$,
\[
\nabla_\phi^2\ell
=\operatorname{diag}(p)-pp^\top\preceq\tfrac12 I.
\]
Indeed, for any $\bm v$, its quadratic form is the variance of $v_K$ for $K\sim p$. This variance is at most one quarter of the squared range of the entries of $\bm v$, and that squared range is at most $2\lVert \bm v\rVert ^2$. Mean reduction multiplies each sample's curvature by $w=1/b$; hence $M=\tfrac12$ is valid in Assumption~\ref{asm:curv}.
\begin{proof}[Proof of Proposition~\ref{prop2}]
Write $\mathcal L_\phi$ for the loss as a function of the server output and put $\bar\phi=\phi(\bm s_{0,i})$. Assumption~\ref{asm:curv} and Taylor's formula give, for any output change $\Delta$,
\[
\mathcal L_\phi(\bar\phi+\Delta)
\le
\mathcal L_\phi(\bar\phi)
+\langle r_i,\Delta\rangle
+\frac M2\lVert \Delta\rVert ^2.
\]
If the server output is affine on the segment, its actual change is
$\Delta=J_{s,i}\bm\delta_i$. Since
$\bm g_{0,i}=J_{s,i}^{\top}r_i$ and
$\lVert J_{s,i}\bm\delta_i\rVert ^2
=\bm\delta_i^\top P_i\bm\delta_i$,
the right-hand side is exactly
$\widehat F_{\mathrm J}(\bm\delta_i)$.
At $\bm\delta_i=\bm0$, both sides equal
$\mathcal L(\bm s_{0,i};\theta^s)$.
For a nonlinear server output, write
\[
\phi(\bm s_{0,i}+\bm\delta_i)
=\bar\phi+J_{s,i}\bm\delta_i+\bm\varrho.
\]
Taylor's integral formula and the stated bounds on the output components give
$\lVert \bm\varrho\rVert \le
(\sqrt C L_\phi/2)\lVert \bm\delta_i\rVert ^2$.
Using the actual output change in the preceding loss bound instead of
$J_{s,i}\bm\delta_i$ adds
\[
\langle r_i,\bm\varrho\rangle
+M\langle J_{s,i}\bm\delta_i,\bm\varrho\rangle
+\frac M2\lVert \bm\varrho\rVert ^2.
\]
Since
$\lVert J_{s,i}\bm\delta_i\rVert
\le B_\phi\lVert \bm\delta_i\rVert $,
this addition is at most
$\tfrac12\varepsilon(\lVert \bm\delta_i\rVert )
\lVert \bm\delta_i\rVert ^2$, where
\[
\varepsilon(u)
=\sqrt C L_\phi(\lVert r_i\rVert +MB_\phi u)
+\frac14CM L_\phi^2u^2.
\]
It vanishes when $L_\phi=0$, completing the proof.
\end{proof}

\newpage
\section{Additional Experiments}
\label{app:extended}
\subsection{Effect of the Cut Layer}
\label{app:cut}

\begin{table}[ht]
\centering
\scriptsize
\setlength{\tabcolsep}{1.5pt}
\renewcommand{\arraystretch}{1.05}
\caption{Effect of the cut layer on ViT-Tiny/CIFAR-100 (IID): test accuracy (\%). Cut $k$ places the first $k$ of the twelve transformer blocks on the client, and the last row reports the ratio of trainable parameters, excluding the embedding layer, between the client-side and server-side models.}
\label{tab:cut}
\providecommand{\stdc}[1]{\,\mbox{\scalebox{0.85}{\tiny$\pm$\,#1}}}
\begin{tabular}{l c c c c c}
\toprule
\textbf{Method} & \textbf{cut 1} & \textbf{cut 2} & \textbf{cut 4} & \textbf{cut 6} & \textbf{cut 8} \\
\midrule
Vanilla SFL & 85.1\stdc{0.1} & 84.9\stdc{0.1} & 85.0\stdc{0.3} & 84.9\stdc{0.3} & 84.8\stdc{0.3} \\
Gradient reuse (w/ perfect comp.) & 84.8\stdc{0.1} & 84.8\stdc{0.1} & 84.5\stdc{0.3} & 84.3\stdc{0.2} & 83.9\stdc{0.2} \\
Gradient reuse (w/o comp.) & 84.5\stdc{0.3} & 84.3\stdc{0.2} & 78.4\stdc{0.4} & 78.5\stdc{0.3} & 77.7\stdc{0.3} \\
\cmidrule(l){1-6}
AccSFL & 71.3\stdc{1.4} & 62.0\stdc{1.3} & 65.6\stdc{1.2} & 71.4\stdc{0.4} & 76.0\stdc{0.2} \\
FSL-SAGE & 47.0\stdc{17.2} & 55.2\stdc{7.7} & 66.9\stdc{7.2} & 69.6\stdc{5.7} & 73.7\stdc{4.1} \\
GAS & 82.9\stdc{0.2} & 83.1\stdc{0.3} & 75.2\stdc{0.9} & 75.6\stdc{0.6} & 76.1\stdc{0.7} \\
MU-SplitFed & 1.0\stdc{0.0} & 1.0\stdc{0.0} & 1.0\stdc{0.0} & 1.0\stdc{0.0} & 9.5\stdc{0.2} \\
CoeF-D (ours) & \underline{84.9\stdc{0.2}} & \textbf{85.0\stdc{0.2}} & \textbf{84.5\stdc{0.2}} & \textbf{84.3\stdc{0.2}} & \textbf{83.8\stdc{0.3}} \\
CoeF-J (ours) & \textbf{85.0\stdc{0.1}} & \underline{84.8\stdc{0.1}} & \underline{83.9\stdc{0.3}} & \underline{82.5\stdc{0.1}} & \underline{81.3\stdc{0.1}} \\
\midrule
Client : server & 1\,:\,11.04 & 1\,:\,5.02 & 1\,:\,2.01 & 1.01\,:\,1 & 1.98\,:\,1 \\
\bottomrule
\end{tabular}
\vspace{-8pt}
\end{table}

\begin{table}[ht]
\centering
\scriptsize
\setlength{\tabcolsep}{1.5pt}
\renewcommand{\arraystretch}{1.05}
\caption{Effect of the cut layer on DistilRoBERTa/20 Newsgroups (IID): test accuracy (\%). Cut $k$ places the first $k$ of the six transformer blocks on the client, and the last row reports the ratio of trainable parameters, excluding the embedding layer, between the client-side and server-side models.}
\label{tab:cut_ng}
\providecommand{\stdc}[1]{\,\mbox{\scalebox{0.85}{\tiny$\pm$\,#1}}}
\begin{tabular}{l c c c c}
\toprule
\textbf{Method} & \textbf{cut 1} & \textbf{cut 2} & \textbf{cut 3} & \textbf{cut 4} \\
\midrule
Vanilla SFL & 69.6\stdc{0.2} & 69.8\stdc{0.2} & 69.7\stdc{0.3} & 69.7\stdc{0.3} \\
Gradient reuse (w/ perfect comp.) & 69.6\stdc{0.2} & 69.3\stdc{0.2} & 69.2\stdc{0.3} & 69.2\stdc{0.2} \\
Gradient reuse (w/o comp.) & 69.5\stdc{0.0} & 69.3\stdc{0.0} & 68.8\stdc{0.2} & 67.7\stdc{0.1} \\
\cmidrule(l){1-5}
AccSFL & 68.2\stdc{0.3} & 66.6\stdc{0.3} & 66.8\stdc{0.3} & 67.8\stdc{0.3} \\
FSL-SAGE & 59.9\stdc{0.7} & 62.0\stdc{0.8} & 64.8\stdc{0.4} & 66.5\stdc{0.0} \\
GAS & 68.4\stdc{0.5} & 68.0\stdc{0.5} & 66.9\stdc{0.2} & 66.6\stdc{0.7} \\
MU-SplitFed & 5.4\stdc{0.7} & 5.1\stdc{0.1} & 5.6\stdc{0.2} & 7.0\stdc{0.9} \\
CoeF-D (ours) & \underline{69.7\stdc{0.1}} & \underline{69.7\stdc{0.1}} & \underline{69.3\stdc{0.2}} & \underline{69.2\stdc{0.3}} \\
CoeF-J (ours) & \textbf{69.8\stdc{0.1}} & \textbf{69.8\stdc{0.3}} & \textbf{69.8\stdc{0.1}} & \textbf{69.5\stdc{0.3}} \\
\midrule
Client : server & 1\,:\,5.09 & 1\,:\,2.04 & 1\,:\,1.03 & 1.92\,:\,1 \\
\bottomrule
\end{tabular}
\end{table}

\subsection{Results under the Non-IID Split}
\label{app:noniid}

\begin{table}[ht]
\centering
\scriptsize
\setlength{\tabcolsep}{3pt}
\renewcommand{\arraystretch}{1.05}
\caption{ViT-Tiny/vision tasks (non-IID): test accuracy (\%).}
\label{tab:vision_noniid}
\begin{tabular}{c l c c c}
\toprule
\textbf{Model} & \textbf{Method} & \textbf{CIFAR-100} & \textbf{Food-101} & \textbf{Tiny-ImageNet} \\
\midrule
 \multirow{9}{*}{ViT-Tiny} & Vanilla SFL & 70.9\std{1.2} & 63.9\std{3.1} & 59.1\std{3.3} \\
    & Gradient reuse (with perfect comp.) & 66.9\std{0.5} & 55.6\std{1.5} & 54.7\std{2.9} \\
    & Gradient reuse (w/o comp.) & 1.0\std{0.1} & 1.0\std{0.0} & 0.5\std{0.0} \\
\cmidrule(l){2-5}
    & AccSFL & 28.2\std{1.3} & 32.5\std{4.3} & 18.8\std{2.1} \\
    & FSL-SAGE & 33.6\std{2.0} & 30.3\std{5.5} & 20.0\std{2.3} \\
    & GAS & 46.0\std{10.2} & 1.0\std{0.0} & 0.6\std{0.1} \\
    & MU-SplitFed & 1.0\std{0.0} & 1.0\std{0.0} & 0.5\std{0.0} \\
    & CoeF-D (ours) & \underline{67.3\std{1.5}} & \underline{53.5\std{2.4}} & \underline{54.9\std{2.7}} \\
    & CoeF-J (ours) & \textbf{69.1\std{1.2}} & \textbf{62.9\std{3.2}} & \textbf{57.7\std{3.4}} \\
\bottomrule
\end{tabular}
\vspace{-8pt}
\end{table}

\begin{table}[ht]
\centering
\scriptsize
\setlength{\tabcolsep}{2.5pt}
\renewcommand{\arraystretch}{1.05}
\caption{DistilRoBERTa/language tasks (non-IID): test accuracy (\%).}
\label{tab:lang_noniid}
\begin{tabular}{c l c c c}
\toprule
\textbf{Model} & \textbf{Method} & \textbf{20 Newsgroups} & \textbf{MNLI-m} & \textbf{MNLI-mm} \\
\midrule
 \multirow{9}{*}{DistilRoBERTa} & Vanilla SFL & 67.2\std{0.6} & 80.4\std{0.5} & 80.9\std{0.7} \\
    & Gradient reuse (with perfect comp.) & 64.9\std{1.3} & 74.6\std{4.2} & 74.5\std{4.8} \\
    & Gradient reuse (w/o comp.) & 62.5\std{1.7} & 33.3\std{1.5} & 33.3\std{1.4} \\
\cmidrule(l){2-5}
    & AccSFL & 61.1\std{1.5} & 71.4\std{1.7} & 71.9\std{2.2} \\
    & FSL-SAGE & 53.2\std{1.6} & 73.0\std{1.3} & 73.4\std{1.3} \\
    & GAS & 64.2\std{0.9} & 37.9\std{6.2} & 38.3\std{7.0} \\
    & MU-SplitFed & 5.2\std{0.1} & 34.5\std{1.3} & 34.5\std{1.1} \\
    & CoeF-D (ours) & \underline{65.7\std{1.0}} & \underline{75.7\std{0.7}} & \underline{75.6\std{0.9}} \\
    & CoeF-J (ours) & \textbf{67.1\std{0.6}} & \textbf{77.0\std{0.5}} & \textbf{77.4\std{0.9}} \\
\bottomrule
\end{tabular}
\vspace{-8pt}
\end{table}

\newpage
\section{Further Analysis}
\subsection{Comparison of Hessian Approximations}
\label{app:hessian}
Here, we compare the gradient outer product \citep{zheng2017dcasgd} adopted by CoeF-D with two other Hessian approximations, i.e., a low-rank approximation of the Gauss--Newton matrix \citep{schraudolph2002ggn,martens2020natural} computed with the Lanczos method \citep{lanczos1950iteration} denoted by GGN, and a diagonal approximation estimated with the Hutchinson estimator \citep{bekas2007diag}, denoted by Hutchinson diagonal. We explain each method and introduce the corresponding  first.

\label{app:discussion}
\begin{table}[ht]
\centering
\scriptsize
\setlength{\tabcolsep}{4pt}
\caption{Notation for further analysis.}
\label{tab:notation_cost}
\begin{tabular}{l l r}
\toprule
\textbf{Symbol} & \textbf{Meaning} &  \\
\midrule
$N$ & samples per client  \\
$B$ & mini-batches per client and round  \\
$d$ & dimension of the smashed data of one sample\\
$\mathrm f^{c}$, $\mathrm b^{c}$ & client forward / backward pass \\
$\mathrm f^{s}$, $\mathrm b^{s}$ & server forward / full backward pass \\
$\mathrm b^{s}_{\mathrm{in}}$ & server backward pass to the smashed data  \\
\midrule
$k$ & rank of the low-rank Fisher (GGN) \\
$C$ & number of classes, the dimension of the server output \\
$\bm v$ & input vector of a Gauss--Newton matrix-vector product \\
$J_s$ & Jacobian of the server output with respect to the smashed data of one sample ($C\times d$) \\
$\Lambda_i$ & Hessian of the loss with respect to the server output at the round-initial output of sample $i$ \\ 
$J_s^{\top}\Lambda J_s$ & generalized Gauss--Newton matrix of one sample  \\
$U$ & eigenvectors for the $k$ leading eigenvalues of $J_s^{\top}\Lambda J_s$  \\
$\Theta$ & diagonal matrix of the $k$ leading eigenvalues ($k\times k$) \\
$m$ & Lanczos iterations per sample (low-rank Fisher)  \\
$m_d$ & Rademacher probes per sample (Hutchinson diagonal) \\
$R$ & random projections per sample (CoeF-J)  \\
$\widehat{B}$ & surrogate hessian per sample (GGN, CoeF-D)\\
$\lambda$ & compensation factor (CoeF-D) \\
\midrule
$N_{\mathrm{buf}}$ & samples stored in the FSL-SAGE buffer \\
$E$ & epochs of auxiliary-network training per alignment  \\
$l$ & rounds between alignments (FSL-SAGE)  \\
$\mathrm a$ & training cost of the auxiliary network \\
$P$ & perturbations per update (MU-SplitFed) \\
$\tau$ & server updates per client update (MU-SplitFed)  \\
$N_g$ & activations generated by the server per client and round (GAS) \\
\bottomrule
\end{tabular}
\end{table}

\paragraph{GGN.} The server runs $m$ Lanczos iterations~\citep{lanczos1950iteration} on the per-sample generalized Gauss--Newton operator $\bm v\mapsto G_i\bm v=J_{s,i}^{\top}\Lambda_iJ_{s,i}\bm v$, evaluated through Gauss--Newton matrix-vector products without forming the matrix~\citep{schraudolph2002ggn}, and keeps the $k$ leading eigenpairs $(U,\Theta)$. For the cross-entropy loss this operator coincides with the Fisher information~\citep{martens2020natural}. The client applies $\widehat B_i\bm\delta_i=U\Theta U^{\top}\bm\delta_i$ at a cost of $k(\mathrm{f}_s+ \mathrm{b}_s)$ per sample. The communication cost for downlink grows to $(1+k)d$ values per sample. With $m=8$ and $k=4$, the server needs $41.45$ GFLOPs per sample, of which the Lanczos iterations account for $34.23$ GFLOPs. 

\paragraph{Hutchinson diagonal.} The server estimates $\operatorname{diag}(H_{0,i})$ by $m_d^{-1}\sum_{j=1}^{m_d}\bm z_j\odot H_{0,i}\bm z_j$ with Rademacher vectors $\bm z_j$~\citep{hutchinson1990trace,bekas2007diag}. Each product $H_{0,i}\bm z_j$ is obtained by a central finite difference of two server gradient queries at $\bm s_{0,i}\pm\epsilon\bm z_j$, so the server adds $2m_d(\mathrm f^s+\mathrm b^s_{\mathrm{in}})$ per sample. With $m_d=8$ this gives $7.226+16\times3.591=64.69$ GFLOPs per sample. The diagonal has the size of $\bm g_{0,i}$, so the downlink doubles, and the client correction is elementwise.

\paragraph{CoeF-D (diagonal outer product).} $\widehat B_{\mathrm D}=\lambda\operatorname{diag}(\bm g_{0,i}\odot\bm g_{0,i})$ is formed by the client from $\bm g_{0,i}$, so the server cost and the downlink equal those of gradient reuse, and the client adds $O(d)$ per sample.

\paragraph{Choice of curvature approximation.}
Table~\ref{tab:approx} compares CoeF-D with two alternatives, which are a low-rank Gauss--Newton approximation computed with the Lanczos method (GGN) and a stochastic estimate of its diagonal (Diagonal). Under IID data, GGN matches CoeF-D but increases the server computation by $5.7\times$ and the communication cost by $5.0\times$, while Diagonal costs $8.9\times$ more computation and barely improves over gradient reuse without compensation. Under non-IID data, both alternatives collapse, and Diagonal falls to the chance level at cut 4. CoeF-D instead retains 69.1\% at cut 4, since its correction scales with $\bm{g}_{0,i}\odot\bm{g}_{0,i}$ and thus stays small for samples that are already well fitted. As CoeF-D achieves this with the same computation and communication as gradient reuse without compensation, we adopt the diagonal gradient outer product as the default compensation.

\begin{table}[t]
\caption{Gradient reuse with different curvature approximations on ViT-Tiny/CIFAR-100. The top table reports test accuracy (\%), and the bottom table reports the server computation per sample and the communication cost at cut 4. The values are measured in our setting from Section \ref{sec:exp}}.
\label{tab:approx}
\centering\scriptsize\setlength{\tabcolsep}{4pt}\renewcommand{\arraystretch}{1.05}
\begin{tabular}{l ll ll ll}
\toprule
\multirow{2}{*}[-0.6ex]{Method} & \multicolumn{2}{l}{cut 1} & \multicolumn{2}{l}{cut 4} & \multicolumn{2}{l}{cut 8}  \\
\cmidrule(lr){2-3} \cmidrule(lr){4-5} \cmidrule(lr){6-7}
 & IID & non-IID & IID & non-IID & IID & non-IID \\
\midrule
GGN & 84.8 & 46.3 & 84.5 & 19.4 & 83.0 & 7.8   \\
Diagonal & 84.7 & 66.2 & 78.7 & 1.0 & 78.0 & 7.7   \\
\cmidrule(l){1-7}
CoeF-D & 84.8 & 72.4 & 84.5 & 69.1 & 83.7 & 48.8   \\
\bottomrule
\end{tabular}

\vspace{4pt}
\begin{tabular}{l c c c c}
\toprule
\textbf{Method} & \textbf{Server computation (GFLOPs)} & \textbf{Computation complexity} & \textbf{Comm. cost (MB)} & \textbf{Comm. volume} \\
\midrule
Gradient reuse (w/o comp.) & 7.23 & $2(\mathrm{f}^s + \mathrm{b}^s)$ & 144.29 & $O(Nd)$ \\
\cmidrule(l){1-5}
GGN & 41.45 & $(k+1)(\mathrm{f}^{s}{+}\mathrm{b}^{s})$ & 721.45 & $O((k+1)Nd)$ \\
Hutchinson diagonal & 64.69 & $ \mathrm{f}^s + \mathrm{b}^s +2m_d(\mathrm{f}^s +\mathrm{b} ^s _{\mathrm{in}})$ & 288.58 & $O(2Nd)$ \\
CoeF-D & 7.23& $2(\mathrm{f}^{s}{+}\mathrm{b}^{s})$ & 144.29 & $O(Nd)$ \\
\bottomrule
\end{tabular}
\end{table}

\subsection{Communication and Computation Costs}
\label{app:cost}

Table~\ref{tab:comm} summarizes the communication and computation costs of all methods. Compared with AccSFL, CoeF-SFL requires more server computation and uses downlink communication, which AccSFL does not need at all. We argue that neither cost is a major limitation. The server computation of CoeF-D is 7.23 GFLOPs per sample, which is twice that of vanilla SFL and AccSFL because the server additionally computes the gradients of the round-initial smashed data. This overhead is a constant factor that does not grow with the number of local iterations compared to vanilla SFL, and it remains below the server computation of GAS (15.22 GFLOPs) and MU-SplitFed (10.06 GFLOPs). Since the server in SFL typically has far more computing power than the clients, such a constant increase is much easier to absorb than an increase in client computation. The downlink of CoeF-D carries the same volume as that of vanilla SFL (144.29\,MB), so it adds no traffic beyond what gradient-based SFL already requires. It is, however, delivered in a single message per round instead of one message per mini-batch, which reduces the number of transfers from 64 to 2 and removes the synchronization between the client and the server during local training.

Tables~\ref{tab:auxsweep} and~\ref{tab:auxsweep_lang} indicate that the auxiliary network can be a more restrictive constraint than these costs. AccSFL assumes a small auxiliary network \citep{han2022localloss}, but even an FC layer without hidden layers amounts to 34.9\% of the trainable client-side parameters on ViT-Tiny and 13.9\% on DistilRoBERTa in our setting, which is a considerable overhead for the resource-limited clients that SFL targets. We vary the auxiliary network of AccSFL from an FC layer with LoRA adapters, which uses only 1.2\% of the trainable client-side parameters, to MLPs with up to 12 hidden layers, which exceed the trainable client-side parameters by a factor of 2.4 on ViT-Tiny and 17 on DistilRoBERTa. The accuracy of AccSFL increases with the size of the auxiliary network and then saturates. On ViT-Tiny/CIFAR-100, it rises from 63.6\% with LoRA to 81.9\% with eight hidden layers, which is still below the 84.5\% of CoeF-D. Even with the largest auxiliary network, AccSFL remains below CoeF-J in every setting and below CoeF-D in all settings except non-IID Food-101. On DistilRoBERTa, the accuracy saturates after a single hidden layer at about 68\% under IID data and 64\% under non-IID data, whereas CoeF-J reaches 69.8\% and 67.1\%. FSL-SAGE, whose auxiliary network is 25 times larger than the trainable client-side model on ViT-Tiny, also falls far behind CoeF-SFL, which shows that enlarging the auxiliary network alone does not close the gap.

Furthermore, these results show that the accuracy of auxiliary-network-based SFL is directly tied to the size of an auxiliary network that has to be stored and trained on the client, whose memory and computation are the scarcest resources in SFL. CoeF-SFL instead places its additional cost on the server computation and the downlink, which are less constrained, and achieves higher accuracy with less client computation and without any auxiliary network.

\begin{table}[ht]
\centering
\scriptsize
\setlength{\tabcolsep}{4pt}
\renewcommand{\arraystretch}{1.2}
\caption{Communication per client and round (top) and computation per sample (bottom) on ViT-Tiny/CIFAR-100 at cut 4. Transfers count uplink and downlink messages separately. For FSL-SAGE, entries with a slash give a round without alignment (left) and an alignment round (right, every $l=10$ rounds), in which the server recomputes the gradients of $N_{\mathrm{buf}}=64$ stored samples and trains the auxiliary network on them for $E=40$ epochs, with $\mathrm a$ counting the double backward of the alignment loss. The values are measured in our setting from Section \ref{sec:exp}.}
\label{tab:comm}
\begin{tabular}{l c c c c c}
\toprule
\textbf{Method} & \textbf{Exch.} & \textbf{Transfers} & \textbf{Volume} & \textbf{Up (MB)} & \textbf{Down (MB)} \\
\midrule
Vanilla SFL        & $O(B)$ & 64 & $O(Nd)$ & 144.29 & 144.29 \\
Gradient reuse (w/ perfect comp.) & $O(B)$ & 65 & $O(Nd)$ & 288.58 & 144.29 \\
Gradient reuse (w/o comp.)          & $O(1)$ & 2  & $O(Nd)$ & 144.29 & 144.29 \\
AccSFL             & $O(1)$ & 1  & $O(Nd)$ & 144.29 & 0.00 \\
MU-SplitFed        & $O(1)$ & 2  & $O(Nd)$ & 432.87 & 0.000122 \\
FSL-SAGE           & $O(1)$ & 1\,/\,2 & $O(Nd)$ & 144.29 & 0.00\,/\,5.32 \\
GAS                & $O(1)$ & 2  & $O(Nd)$ & 144.29 & 144.29 \\
\cmidrule(l){1-6}
CoeF-D (ours)      & $O(1)$ & 2  & $O(Nd)$ & 144.29 & 144.29 \\
CoeF-J (ours)      & $O(1)$ & 2  & $O((1+R)Nd)$ & 144.29 & 288.58 \\
\bottomrule
\end{tabular}

\vspace{6pt}
\begin{tabular}{l c c c c}
\toprule
 & \multicolumn{2}{c}{\textbf{Client}} & \multicolumn{2}{c}{\textbf{Server}} \\
\cmidrule(lr){2-3}\cmidrule(lr){4-5}
\textbf{Method} & \textbf{Complexity} & \textbf{GFLOPs} & \textbf{Complexity} & \textbf{GFLOPs} \\
\midrule
Vanilla SFL        & $\mathrm{f}^{c}{+}\mathrm{b}^{c}$ & 1.88 & $\mathrm{f}^{s}{+}\mathrm{b}^{s}$ & 3.63 \\
Gradient reuse (w/ perfect comp.) & $2\mathrm{f}^{c}{+}\mathrm{b}^{c}$ & 2.77 & $2(\mathrm{f}^{s}{+}\mathrm{b}^{s})$ & 7.23 \\
Gradient reuse (w/o comp.)          & $2\mathrm{f}^{c}{+}\mathrm{b}^{c}$ & 2.77 & $2(\mathrm{f}^{s}{+}\mathrm{b}^{s})$ & 7.23 \\
AccSFL             & $2\mathrm{f}^{c}{+}\mathrm{b}^{c}{+}\mathrm{a}$ & 2.80 & $\mathrm{f}^{s}{+}\mathrm{b}^{s}$ & 3.63 \\
MU-SplitFed        & $(1{+}2P)\mathrm{f}^{c}$ & 2.69 & $2P(\tau{+}1)\mathrm{f}^{s}$ & 10.06 \\
FSL-SAGE           & $2\mathrm{f}^{c}{+}\mathrm{b}^{c}$ & 2.77 & $\mathrm{f}^{s}{+}\mathrm{b}^{s}\,/\,(1{+}\tfrac{N_{\mathrm{buf}}}{N})(\mathrm{f}^{s}{+}\mathrm{b}^{s}){+}\tfrac{EN_{\mathrm{buf}}}{N}\mathrm{a}$ & 3.63\,/\,14.21 \\
GAS                & $2\mathrm{f}^{c}{+}\mathrm{b}^{c}$ & 2.77 & $(2{+}\tfrac{N_g}{N})(\mathrm{f}^{s}{+}\mathrm{b}^{s})$ & 15.22 \\
\cmidrule(l){1-5}
CoeF-D (ours)      & $2\mathrm{f}^{c}{+}\mathrm{b}^{c}{+}O(d)$ & 2.77 & $2(\mathrm{f}^{s}{+}\mathrm{b}^{s})$ & 7.23 \\
CoeF-J (ours)      & $2\mathrm{f}^{c}{+}\mathrm{b}^{c}{+}O(Rd)$ & 2.77 & $2\mathrm{f}^{s}{+}(2{+}R)\mathrm{b}^{s}$ & 9.14 \\
\bottomrule
\end{tabular}
\vspace{-8pt}
\end{table}

\begin{table}[t]
\centering
\scriptsize
\setlength{\tabcolsep}{1.6pt}
\renewcommand{\arraystretch}{1.05}
\caption{Auxiliary-network size sweep for AccSFL on ViT-Tiny at cut 4: server-side test accuracy (\%). Aux.\ params gives the trainable parameters of the auxiliary network and their ratio to the trainable client-side parameters (55.3K under LoRA). FC is the auxiliary network of AccSFL in the other tables, and FC + LoRA freezes the FC weight and trains rank-$r$ adapters and the bias. MLP stacks the given number of hidden layers of width $d/2=96$ with GELU. FSL-SAGE uses its own auxiliary network, which consists of three server-side blocks followed by a layer norm and an FC head.}
\label{tab:auxsweep}
\begin{tabular}{l rr cccccc}
\toprule
 & \multicolumn{2}{c}{\textbf{Aux.\ params}} & \multicolumn{3}{c}{\textbf{IID}} & \multicolumn{3}{c}{\textbf{Non-IID}}\\
\cmidrule(lr){2-3}\cmidrule(lr){4-6}\cmidrule(lr){7-9}
\textbf{Method} & \# & \% client & CIFAR-100 & Food-101 & Tiny-ImageNet & CIFAR-100 & Food-101 & Tiny-ImageNet\\
\midrule
FC + LoRA ($r{=}2$) & 0.7K & 1.2 & 63.6\std{1.2} & 64.9\std{0.4} & 48.8\std{0.7} & 36.6\std{0.6} & 37.0\std{4.5} & 24.1\std{2.7}\\
FC + LoRA ($r{=}4$) & 1.3K & 2.3 & 63.8\std{1.0} & 65.2\std{0.5} & 49.4\std{1.1} & 37.1\std{0.8} & 37.5\std{4.6} & 24.5\std{2.2}\\
FC (AccSFL) & 19.3K & 34.9 & 65.6\std{1.2} & 67.5\std{0.1} & 47.4\std{0.2} & 28.2\std{1.3} & 32.5\std{4.3} & 18.8\std{2.1}\\
MLP, 1 hidden & 28.2K & 51.0 & 78.8\std{0.2} & 77.5\std{0.2} & 70.7\std{0.2} & 54.7\std{1.9} & 51.5\std{3.8} & 42.0\std{2.8}\\
MLP, 4 hidden & 56.2K & 101.6 & 81.5\std{0.4} & 79.9\std{0.2} & 74.4\std{0.1} & 62.7\std{1.9} & 56.9\std{3.2} & 51.4\std{3.2}\\
MLP, 8 hidden & 93.4K & 168.9 & 81.9\std{0.1} & 80.6\std{0.1} & 75.4\std{0.2} & 64.6\std{2.2} & 58.7\std{3.7} & 53.9\std{3.6}\\
MLP, 12 hidden & 130.7K & 236.3 & 81.8\std{0.2} & 80.7\std{0.1} & 75.5\std{0.2} & 64.8\std{1.8} & 59.2\std{3.4} & 54.2\std{3.3}\\
\cmidrule(l){1-9}
FSL-SAGE & 1.40M & 2524.1 & 66.9\std{7.2} & 60.5\std{0.9} & 50.9\std{9.0} & 33.6\std{2.0} & 30.3\std{5.5} & 20.0\std{2.3}\\
\cmidrule(l){1-9}
CoeF-D (ours) & 0 & 0.0 & 84.5\std{0.2} & 83.4\std{0.2} & 78.3\std{0.1} & 67.3\std{1.5} & 53.5\std{2.4} & 54.9\std{2.7}\\
CoeF-J (ours) & 0 & 0.0 & 83.9\std{0.3} & 83.6\std{0.0} & 77.8\std{0.1} & 69.1\std{1.2} & 62.9\std{3.2} & 57.7\std{3.4}\\
\bottomrule
\end{tabular}
\end{table}

\begin{table}[t]
\centering
\scriptsize
\setlength{\tabcolsep}{3pt}
\renewcommand{\arraystretch}{1.05}
\caption{Auxiliary-network size sweep on DistilRoBERTa/20 Newsgroups at cut 2: server-side test accuracy (\%). Columns as in Table~\ref{tab:auxsweep}, where the client-side model has 110.6K trainable parameters (LoRA) and MLP uses hidden width $d/2=384$. FSL-SAGE uses its own auxiliary network, which consists of two of the four server-side blocks followed by a layer norm and an FC head, with 14.3M parameters.}
\label{tab:auxsweep_lang}
\begin{tabular}{l rr cc}
\toprule
 & \multicolumn{2}{c}{\textbf{Aux.\ params}} & & \\
\cmidrule(lr){2-3}
\textbf{Method} & \# & \% client & \textbf{IID} & \textbf{Non-IID}\\
\midrule
FC + LoRA ($r{=}2$) & 1.6K & 1.4 & 65.3\std{0.6} & 59.7\std{1.9}\\
FC + LoRA ($r{=}4$) & 3.2K & 2.9 & 65.4\std{0.5} & 59.8\std{1.7}\\
FC (AccSFL) & 15.4K & 13.9 & 66.6\std{0.3} & 61.1\std{1.5}\\
MLP, 1 hidden & 303.0K & 274.0 & 68.1\std{0.1} & 64.0\std{1.1}\\
MLP, 4 hidden & 746.5K & 675.0 & 67.9\std{0.0} & 64.0\std{0.6}\\
MLP, 8 hidden & 1.34M & 1209.7 & 67.8\std{0.8} & 64.4\std{1.1}\\
MLP, 12 hidden & 1.93M & 1744.5 & 67.8\std{0.7} & 64.1\std{1.5}\\
\cmidrule(l){1-5}
FSL-SAGE & 14.3M & 115.3 & 62.0\std{0.8} & 53.2\std{1.6}\\
\cmidrule(l){1-5}
CoeF-D (ours) & 0 & 0.0 & 69.7\std{0.1} & 65.7\std{1.0}\\
CoeF-J (ours) & 0 & 0.0 & 69.8\std{0.3} & 67.1\std{0.6}\\
\bottomrule
\end{tabular}
\end{table}

\end{document}